\newcommand{\states}{\mathcal{S}}
\newcommand{\rlstate}{s}
\newcommand{\actions}{\mathcal{A}}
\newcommand{\action}{a}
\newcommand{\rewards}{\mathcal{R}}
\newcommand{\reward}{r}
\newcommand{\discount}{\gamma}
\newcommand{\transitions}{\mathcal{P}}
\DeclareMathOperator*{\expectation}{\mathbb{E}}
  \providecommand\BibTeX{{%
    \normalfont B\kern-0.5em{\scshape i\kern-0.25em b}\kern-0.8em\TeX}}}
\begin{document}

\title{Off-Policy Deep Reinforcement Learning with Analogous Disentangled Exploration}  




\author{Anji Liu}
\affiliation{
    \institution{University of California, Los Angeles}
    \streetaddress{404 Westwood Plaza}
}
\email{anjiliu219@gmail.com}

\author{Yitao Liang}
\affiliation{
    \institution{University of California, Los Angeles}
    \streetaddress{404 Westwood Plaza}
}
\email{yliang@cs.ucla.edu}

\author{Guy Van den Broeck}
\affiliation{
    \institution{University of California, Los Angeles}
    \streetaddress{404 Westwood Plaza}
}
\email{guyvdb@cs.ucla.edu}

\renewcommand{\shortauthors}{A. Liu, Y. Liang, and G.V.D. Broeck}

\newlength{\dbltextfloatsepsave} \setlength{\dbltextfloatsepsave}{\dbltextfloatsep}

\begin{abstract}  
Off-policy reinforcement learning (RL) is concerned with learning a rewarding policy by executing another policy that gathers samples of experience. While the former policy (i.e. target policy) is rewarding but in-expressive (in most cases, deterministic), doing well in the latter task, in contrast, requires an expressive policy (i.e. behavior policy) that offers guided and effective exploration.
Contrary to most methods that make a trade-off between optimality and expressiveness, disentangled frameworks explicitly decouple the two objectives, which each is dealt with by a distinct separate policy. 
Although being able to freely design and optimize the two policies with respect to their own objectives, naively disentangling them can lead to inefficient learning or stability issues. To mitigate this problem, our proposed method \emph{Analogous Disentangled Actor-Critic} (ADAC) designs analogous pairs of actors and critics. Specifically, ADAC leverages a key property about Stein variational gradient descent (SVGD) to constraint the expressive energy-based behavior policy with respect to the target one for effective exploration. Additionally, an analogous critic pair is introduced to incorporate intrinsic rewards in a principled manner, with theoretical guarantees on the overall learning stability and effectiveness.
We empirically evaluate environment-reward-only ADAC on 14 continuous-control tasks and report the state-of-the-art on 10 of them. 
We further demonstrate ADAC, when paired with intrinsic rewards, outperform alternatives in exploration-challenging tasks.
\end{abstract}

\keywords{Reinforcement Learning; Deep Reinforcement Learning; Exploration}  

\maketitle


\section{Introduction}
\label{Introduction}

Reinforcement learning (RL) studies the control problem where an agent tries to navigate through an unknown environment \cite{sutton2018reinforcement}. The agent attempts to maximize its cumulative rewards through an iterative trial-and-error learning process \cite{arulkumaran2017brief}. Recently, we have seen many successes of applying RL to challenging simulation \cite{Mnih15,Liang16} and real-world \cite{Silver17,Leibo2017,Wang2017} problems. 
Inherently, RL consists of two distinct but closely related objectives: learn the best possible policy from the gathered samples (i.e. \emph{exploitation}) and collect new samples effectively (i.e. \emph{exploration}). While the \emph{exploitation} step shares certain similarities with tasks such as supervised learning, \emph{exploration} is unique, essential, and is often viewed as the backbone of many successful RL algorithms \cite{mnih2013playing,haarnoja2018soft2}.

In order to explore novel states that are potentially rewarding, it is crucial to incorporate randomness when interacting with the environment. Thanks to its simplicity, injecting noise into the action \cite{lillicrap2015continuous,fujimoto2018addressing} or parameter space \cite{fortunato2017noisy,plappert2017parameter} is widely used to implicitly construct behavior policies from target policies. In most prior work, the injected noise has a mean of zero, such that the updates to the target policy have no~bias \cite{fujimoto2018off,gu2016q}.
The stability of noise-based exploration, which is obtained from its non-biased nature, makes it a safe exploration strategy.
However, noise-based approaches are generally less effective since they are neither aware of potentially rewarding actions nor guided by the exploration-oriented targets.

To tackle the above problem, two orthogonal lines of approaches have been proposed. One of them considers extracting more information from the current knowledge (i.e. gathered samples). For example, energy-based RL algorithms learn to capture potentially rewarding actions through its energy objective \cite{haarnoja2018soft2,sutton2018reinforcement}. 
A second line of work considers leveraging external guidance to aid exploration. In a nutshell, they formulate some intuitive tendencies in exploration as an additional reward function called intrinsic reward \cite{bellemare2016unifying,houthooft2016vime}. Guided by these auxiliary tasks, RL algorithms tend to act curiously, substantially improving exploration of the state space.

Despite their promising exploration efficiency, both lines of work fail to fully \emph{exploit} the collected samples and turn them into the highest performing policy, as their learned policy often executes sub-optimal actions. To avoid this undesirable \emph{exploration-exploitation} trade-off, several attempts have been made to separately design two policies (i.e. disentangle them), of which one aims to gather the most informative examples (and hence is commonly referred as the \emph{behavior policy}) while the other attempts to best utilize the current knowledge from the gathered samples (and hence is usually referred as the \emph{target policy}) \cite{colas2018gep,beyer2019mulex}. To help fulfill their respective goals, disentangled objective functions and learning paradigms are further designed and separately applied to the two policies.

However, naively disentangling the behavior from the target policy would render their update process unstable. For example, when disentangled naively, the two policies tend to differ substantially due to their contrasting objectives, which is known to potentially result in catastrophic learning failure \cite{nachum2018trustpcl}. To mitigate this problem, we propose \emph{Analogous Disentangled Actor-Critic} (ADAC), where being \emph{analogous} is reflected by the constraints imposed on the disentangled actor-critic \cite{mnih2016asynchronous} pairs. ADAC consists of two main algorithmic contributions. First, \emph{policy co-training} guides the behavior policy's update by the target policy, making the gathered samples more helpful for the target policy's learning process while keeping the expressiveness of the behavior policy for extensive exploration (Section~\ref{Policy Co-training}). Second, \emph{critic bounding} allows an additional explorative critic to be trained with the aid of intrinsic rewards (Section~\ref{Critic Bounding}). Under certain constraints from the target policy, the resultant critic maintains the curiosity incentivized by intrinsic rewards while guarantees training stability of the target policy.

Besides Section~\ref{Analogous Disentangled Actor-Critic}'s elaboration of our method, the rest of the paper is organized as follows. Section~\ref{Related Work} reviews and summarizes the related work. Key background concepts and notations are introduced in Section~\ref{Preliminaries}. Experiment details of ADAC are explained in Section~\ref{experiments}. Finally, conclusions are presented in Section~\ref{conclusion}.\footnote{We provide code to reproduce our experiments at  \url{https://github.com/UCLA-StarAI/Analogous-Disentangled-Actor-Critic}.}

\section{Related Work}
\label{Related Work}

Learning to be aware of potentially rewarding actions is a promising strategy to conduct exploration, as it automatically prunes less rewarding actions and concentrates exploration efforts on those with high potential. To capture these actions, expressive learning models/objectives are widely used. Most noticeable recent work on this direction, such as Soft Actor-Critic \cite{haarnoja2018soft2}, EntRL \cite{schulman2017equivalence}, and Soft Q Learning \cite{haarnoja2017reinforcement}, learns an expressive energy-based target policy according to the maximum entropy RL objective \cite{ziebart2010modeling}. However, the expressiveness of their policies in turn becomes a burden for their optimality, and in practice, trade-offs such as temperature controlling \cite{haarnoja2018soft} and reward scaling \cite{haarnoja2017reinforcement} have to be made for better overall performance. As we shall show later, ADAC makes use of a similar but extended energy-based target, and alleviates the compromise on optimality using the \emph{analogous disentangled} framework.

Ad-hoc exploration-oriented learning targets that are designed to better explore the state space are also promising. Some recent research efforts on this line include count-based exploration \cite{xu2017study,bellemare2016unifying} and intrinsic motivation \cite{houthooft2016vime,fu2017ex2,kulkarni2016hierarchical} approaches. The outcome of these methods is usually an auxiliary reward termed the \emph{intrinsic reward}, which is extremely useful when the environment-defined reward is sparsely available. However, as we shall illustrate in Section~\ref{Evaluation in Sparse-Reward Environments}, intrinsic reward potentially biases the task-defined learning objective, leading to catastrophic failure in some tasks. Again, with the disentangled nature of ADAC, we give a principled solution to solve this problem with theoretical guarantees (Section~\ref{Critic Bounding}).

Explicitly disentangling exploration from exploitation has been used to solve a common problem in the above approaches, which is sacrificing the target policy's optimality for better exploration. By separately designing exploration and exploitation components, both objectives can be better pursued simultaneously. Specifically, GEP-PG \cite{colas2018gep} uses a Goal Exploration Process (GEP) \cite{forestier2017intrinsically} to generate samples and feed them to the replay buffer of DDPG \cite{lillicrap2015continuous} or its variants. Multiple losses for exploration (MULEX) \cite{beyer2019mulex} proposes to use a series of intrinsic rewards to optimize different policies in parallel, which in turn generates abundant samples to train the target policy. Despite having intriguing conceptual ideas, they overlook the training stability issue caused by the mismatch in the distribution of collected samples (using the behavior policy) and the distribution induced by the target policy, which is formalized as \emph{extrapolation error} in \cite{fujimoto2018off}. ADAC aims to mitigate the training stability issue caused by the extrapolation error while maintaining effective exploration exploitation trade-off promised by expressive behavior policies (Section~\ref{Policy Co-training}) as well as intrinsic rewards (Section~\ref{Critic Bounding}) using its \emph{analogous disentangled} actor-critic pairs.

\footnotetext[2]{In all the environments considered in this paper, actions are assumed to be continuous.}

\section{Preliminaries}
\label{Preliminaries}

In this section, we introduce the RL setting we address in this paper, and some background concepts that we utilize to build our method.

\subsection{RL with Continuous Control}
\label{Reinforcement Learning in Continuous Action Spaces}

In a standard \emph{reinforcement learning} (RL) setup, an agent interacts with an unknown environment at discrete time steps and aims to maximize the \emph{reward} signal \cite{sutton2018reinforcement}. The environment is often formalized as a \emph{Markov Decision Process} (MDP), which can be succinctly defined as a 5-tuple $\mathcal{M} \!= <\!\states, \actions, \rewards, \transitions, \discount\!>$. At time step $t$, the agent in state $\rlstate_t \!\!\in\! \states$ takes action $\action_t \!\!\in\! \actions$ according to \emph{policy} $\pi$, a conditional distribution of $\action$ given $\rlstate$, leading to the next state $\rlstate_{t+1}$ according to the transition probability $\transitions(\rlstate_{t+1} \!\mid\! \rlstate_t, \action_t)$. Meanwhile, the agent observes reward $\reward_t \!\sim\! \rewards(\rlstate_t,\action_t)$ emitted from the environment.\footnotemark

The agent strives to learn the \emph{optimal policy} that maximizes the expected return $J(\pi)=\mathbb{E}_{s_{0} \sim \rho_{0}, \action_t \sim \pi, s_{t+1} \sim \transitions, \reward_t \sim \rewards }\left[
\sum_{t=0}^{\infty} \gamma^{t} \reward_{t} \right]$, where $\rho_{0}$ is the initial state distribution and $\discount \in [0, 1)$ is the discount factor balancing the priority of short and long-term rewards. For continuous control, the policy $\pi$ (also known as the actor in the actor-critic framework) parameterized by $\theta$ can be updated by taking the gradient $\nabla_{\theta} J(\pi)$. According to the deterministic policy gradient theorem \cite{Silver2014Deterministic}, $\nabla_{\theta} J(\pi) = \mathbb{E}_{(\rlstate, \action) \sim \rho_{\pi}}\left[\nabla_{a} Q_{\rewards}^{\pi}(s, a) \nabla_{\theta} \pi(s)\right]$, where $\rho_{\pi}$ denotes the state-action marginals of the trajectory distribution induced by $\pi$, and $Q_{\rewards}^{\pi}$ denotes the state-action value function (also know as the critic in the actor-critic framework), which represents the expected return under the reward function specified by $\rewards$ when performing action $\action$ at state $\rlstate$ and following policy $\pi$ afterwards. Intuitively, it measures how preferable executing action $\action$ is at state $\rlstate$ with respect to the policy $\pi$ and reward function $\rewards$. 
Following \cite{bellman1966dynamic}, we additionally introduce the Bellman operator, which is commonly used to update the $Q$-function. The Bellman operator $\mathcal{T}^{\pi}_{\rewards}$ uses $\rewards$ and $\pi$ to update an arbitrary value function $Q$, which is not necessarily defined with respect to the same $\pi$ or $\rewards$. For example, the outcome of $\mathcal{T}^{\pi_{1}}_{\rewards_{1}} Q^{\pi_{2}}_{\rewards_{2}}(s_{t}, a_{t})$ is defined as
$\rewards_{1}(s_{t}, a_{t}) + \gamma \mathbb{E}_{s_{t+1} \sim \transitions, a_{t+1} \sim \pi_{1}}[Q^{\pi_{2}}_{\rewards_{2}}(s_{t+1}, a_{t+1})]
$. By slightly abusing notations, we further define the outcome of $\mathcal{T}^{\mathrm{max}}_{\rewards_{1}} Q^{\pi_{2}}_{\rewards_{2}}(s_{t}, a_{t})$ as $\rewards_{1}(s_{t}, a_{t}) + \gamma \max_{a_{t+1}} \mathbb{E}_{s_{t+1} \sim \transitions}[Q^{\pi_{2}}_{\rewards_{2}}(s_{t+1}, a_{t+1})]$. Some also call $\mathcal{T}^{\mathrm{max}}_{\rewards}$ the Bellman optimality operator.

\subsection{Off-policy Learning and Behavior Policy}
\label{Off-policy Learning and Behavior Policy}

To aid exploration, it is a common practice to construct/store more than one policy for the agent (either implicitly or explicitly). Off-policy actor-critic methods \cite{Watkins92} allow us to make a clear separation between the \emph{target policy}, which refers to the best policy currently learned by the agent, and the \emph{behavior policy}, which the agent follows to interact with the environment. Note that the discussion in Section~\ref{Reinforcement Learning in Continuous Action Spaces} is largely around the target policy. Thus, starting from this point, to avoid confusion, $\pi$ is reserved to only denote the target policy and notation $\mu$ is introduced to denote the behavior policy. Due to the policy separation, the target policy $\pi$ is instead resorting to the estimates calculated with regards to samples collected by the behavior policy $\mu$, that is, the deterministic policy gradient mentioned above is approximated as
    \begin{equation}
        \label{Eq:deterministic-policy-gradient}
        \nabla_{\theta} J_{\pi} (\theta) \approx \mathbb{E}_{(\rlstate, \action) \sim \rho_{\mu}}\left[\nabla_{a} Q^{\pi}_{\rewards}(s, a) \nabla_{\theta} \pi (s) \right],
    \end{equation}
\noindent where $\rewards$ is the environment-defined reward. One of the most notable off-policy learning algorithms that capitalize on this idea is deep deterministic policy gradient (DDPG) \cite{lillicrap2015continuous}. To mitigate function approximation errors in DDPG, Fujimoto et al. proposes TD3 \cite{fujimoto2018addressing}. Given that DDPG and TD3 have demonstrated themselves to be competitive in many continuous control benchmarks, we choose to implement our \emph{Analogous Disentangled Actor Critic} (ADAC) on top of their target policies. Yet, it is worth reiterating that ADAC is compatible with any existing off-policy learning algorithms. We defer a more detailed discussion of ADAC's compatibility until we start formally introducing our method in Section~\ref{Algorithm Overview}.

\begin{figure}[t]
    \centering
    \includegraphics[width=\columnwidth]{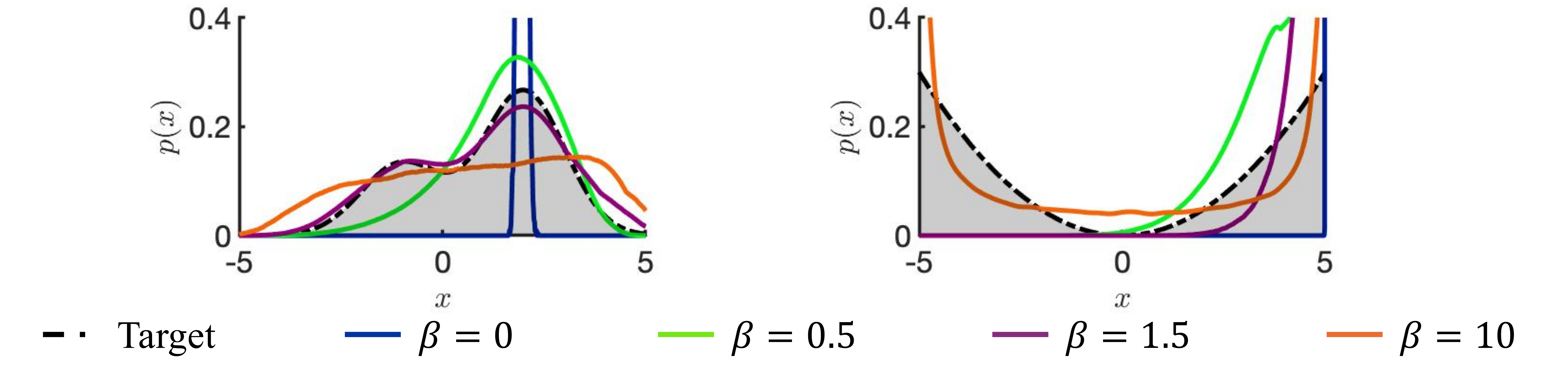}
    \caption{Evaluation of the amortized SVGD learning algorithm \cite{feng2017learning} (Eq~\eqref{eq:SVGD_gradient}) with different $\beta$ under two target distributions.}
    \label{fig:SVGD_illustration}
\end{figure}

\subsection{Expressive Behavior Policies through Energy-Based Representation}
\label{Expressive Behavior Policy by Energy-based Representations}

One promising way to design an exploration-oriented behavior policy without external guidance, which is usually in the form of intrinsic reward, is by increasing the expressiveness of $\mu$ to capture information about potentially rewarding actions.
Energy-based representations have recently been increasingly chosen as the target form to construct an expressive behavior policy. Since its first introduction by \cite{ziebart2010modeling} to achieve maximum-entropy reinforcement learning, several additional prior work keeps improving upon this idea. Among them, the most notable ones include Soft Q-Learning (SQL) \cite{haarnoja2017reinforcement}, EntRL \cite{schulman2017equivalence}, and Soft Actor-Critic (SAC) \cite{haarnoja2018soft}. Collectively, they have achieved competitive results on many benchmark tasks. Formally, the energy-based behavior policy is defined as
    \begin{equation}
        \label{eq:energy_based_behavior_policy}
        \mu(a \mid s) \propto \exp{(Q(s, a))},
    \end{equation}
\noindent where $Q$ is commonly selected to be the target critic $Q^{\pi}_{\rewards}$ in prior work \cite{haarnoja2018soft,haarnoja2018soft2}. Various efficient samplers have been proposed to approximate the distribution specified in Eq~\eqref{eq:energy_based_behavior_policy}. Among them, \cite{haarnoja2017reinforcement}'s Stein variational gradient descent (SVGD) \cite{liu2016stein,WangZ018} based sampler is especially worth noting as it has the potential to approximate complex and multi-model behavior policies. Given this, we also choose it to sample the behavior policy in our proposed ADAC.

Additionally, we want to highlight an intriguing property of SVGD that is critical for understanding why we can perform \emph{analogous disentangled exploration} effectively. Intuitively, SVGD transforms a set of particles to match a target distribution. In the context of RL, following Amortized SVGD \cite{feng2017learning}, we use a neural network sampler $f_{\varphi}(\rlstate, \xi)$ ($\xi \sim \mathcal{N}(\mathbf{0}, \mathbf{I})$) to approximate Eq~\eqref{eq:energy_based_behavior_policy}, which is done by minimizing the KL divergence between two distributions. According to \cite{feng2017learning}, $f_{\varphi}$ is updated according to the following gradient:
\begin{equation}
\label{eq:SVGD_gradient}
\begin{aligned}
    \nabla_{\varphi} J_{\mu} (\varphi) \approx \mathbb{E}_{s,\xi \sim \mathcal{N} (\mathbf{0}, \mathbf{I})}\Big[\sum_{j=1}^{K}\big[\underbrace{\mathcal{K}(a, a'_{j}) \nabla_{a'_{j}} Q(s, a'_{j})}_{\mathrm{term~1}} \\ + \beta \cdot \underbrace{\nabla_{a'_{j}} \mathcal{K}(a, a'_{j})}_{\mathrm{term~2}}\big]\big|_{a=f_{\varphi}(s, \xi)} \frac{\partial f_{\varphi}(s, \xi)}{\partial \varphi}\Big] / K,
\end{aligned}
\end{equation}
\noindent where $\mathcal{K}$ is a positive definite kernel\footnote{Formally, in ADAC, we define the kernel as $\mathcal{K}(a, \hat{a}_{i}) = \frac{1}{\sqrt{2 \pi} (d / K)} \exp \left ( - \frac{\left \| a - \hat{a}_{i} \right \|^{2}}{2 (d / K)^2} \right )$, where $d$ is the number of dimensions of the action space.}, and $\beta$ is an additional hyper-parameter proposed to make optimality-expressiveness trade-off. The intrinsic connection between Eq~\eqref{eq:SVGD_gradient} and the deterministic policy gradient (i.e. Eq~\eqref{Eq:deterministic-policy-gradient}) is introduced in \cite{haarnoja2017reinforcement} and \cite{feng2017learning}: the first term of the gradient represents a combination of deterministic policy gradients weighted by the kernel $\mathcal{K}$, while the second term of the gradient represents an entropy maximization objective.

To aid a better understanding of this relation, we illustrate the distribution approximated by SVGD using different $\beta$ in a toy example as shown in Figure~\ref{fig:SVGD_illustration}. The dashed line is the approximation target. When $\beta$ is small, the entropy of the learned distribution is restricted and the overall policy leans towards the highest-probability region. On the other hand, larger $\beta$ leads to more expressive approximation.

\section{Method}
\label{Analogous Disentangled Actor-Critic}

This section introduces our proposed method \emph{Analogous Disentangled Actor-Critic} (ADAC). We start by providing an overview of it (Section~\ref{Algorithm Overview}), which is followed by elaborating the specific choices we make to design our actors and critics (Sections~\ref{Policy Co-training} and \ref{Critic Bounding}).

\subsection{Algorithm Overview}
\label{Algorithm Overview}

Figure~\ref{fig:ADAC_overview} provides a diagram overview of ADAC, which consists of two pairs of actor-critic $\langle \mu, Q^{\pi}_{\rewards'} \rangle$ and $\langle \pi, Q^{\pi}_{\rewards} \rangle$ (see the blue and pink box) to achieve \emph{disentanglement}. Same with prior off-policy algorithms (e.g., DDPG), during training ADAC alternates between the two main procedures, namely \emph{sample collection} (dotted green box), where we use $\mu$ to interact with the environment to collect training samples, and \emph{model update} (dashed gray box), which consists of two phases: (i) batches of the collected samples are used to update both critics (the pink box); (ii) $\mu$ and $\pi$ (the blue box) are updated according to their respective critic using different objectives. During evaluation, $\pi$ is used to interact with the environment.

\footnotetext[4]{$f$ takes two components $\rlstate$ and $\xi$ as input, and $\varphi$ is the parameter set of $f$.}

Both steps in the \emph{model update} phase manifest the analogous property of our method.
First, although optimized with respect to different objectives, both $\mu$ and $\pi$ are represented by the neural network $f$, where $\mu (s) \!:=\! f_{\varphi} (s,\! \xi) |_{\xi \sim \mathcal{N} (\mathbf{0}, \mathbf{I})}$ and $\pi (s) \!:=\! f_{\varphi} (s,\! \xi) |_{\xi = [0,\dots,0]^{T}}$.\footnotemark[4] That is, $\pi$ is a deterministic policy since its input $\xi$ is fixed, while $\mu (s)$ can be regarded as an action sampler that uses the randomly sampled $\xi$ to generate actions. As we shall demonstrate in Section~\ref{Policy Co-training}, this specific setup effectively restricts the deviation between the two policies ($\mu$ and $\pi$) (i.e. update bias), which stabilizes the training process and maintains sufficient expressiveness in the behavior policy $\mu$ (also see Section~\ref{Analysis of the stochastic behavior policy} for an intuitive illustration).

\begin{figure}[t]
    \centering
    \includegraphics[width=\columnwidth]{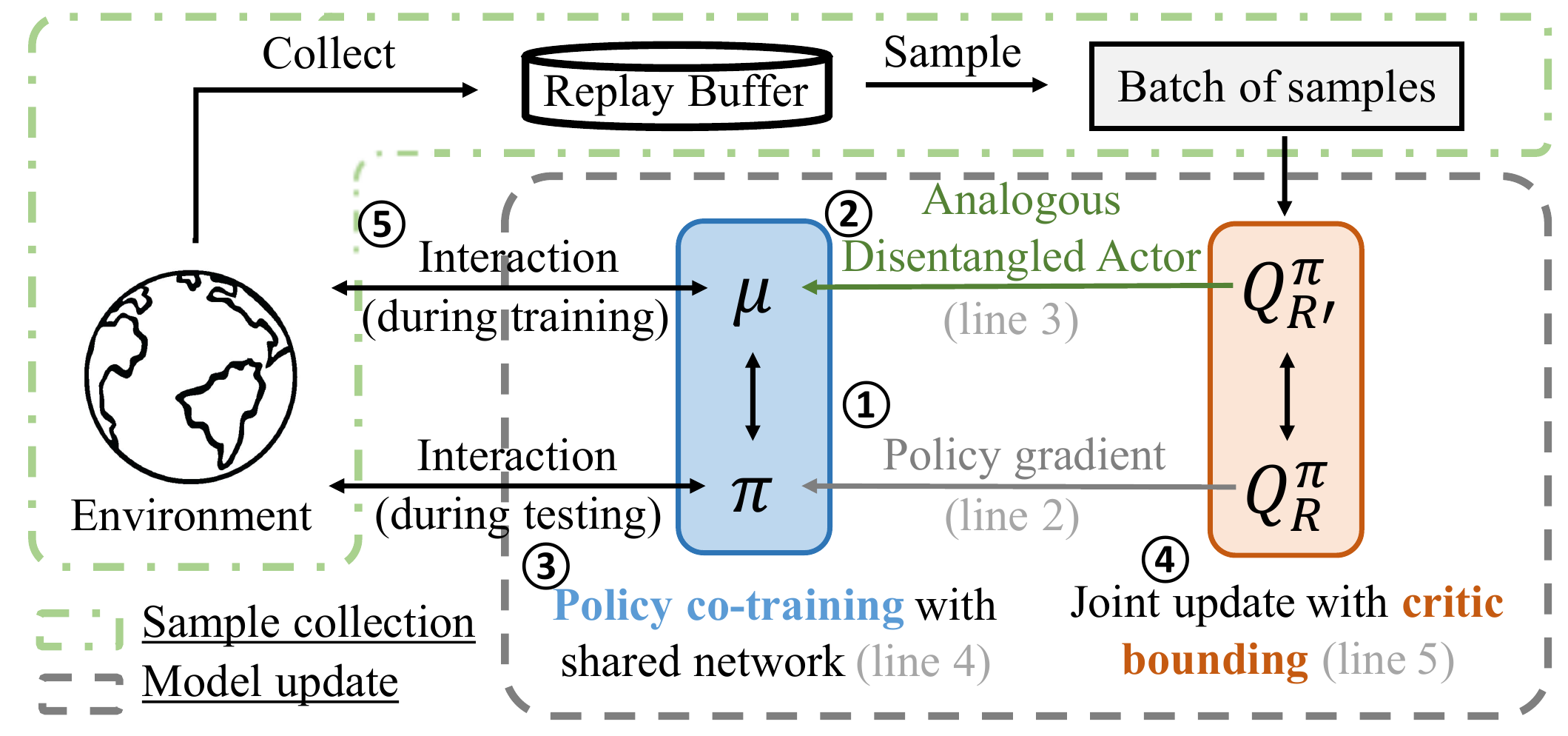}
    \caption{Block diagram of ADAC, which consists of the sample collection phase (green box with dotted line) and the model update phase (gray box with dashed line).
    Model (i.e. actor and critic networks) updates are performed
    sequentially from \textcircled{1} to \textcircled{4}. Each update step's corresponding line in Algorithm~\ref{alg:ADAC_high_level} is shown in brackets.}
    \label{fig:ADAC_overview}
\end{figure}

The second exhibit of our method's \emph{analogous} nature lies on our designed critics $Q_{\rewards}^{\pi}$ and $Q_{\rewards'}^{\pi}$, which are based on the environment-defined reward $\rewards$ and the augmented reward $\rewards' := \rewards + \rewards^{\mathrm{in}}$ ($\rewards^{\mathrm{in}}$ is the intrinsic reward) respectively yet are both computed with regard to the target policy $\pi$. As a standard approach, $Q_{\rewards}^{\pi}$ approximates the task-defined objective that the algorithm aims to maximize. On the other hand, $Q_{\rewards'}^{\pi}$ is a behavior critic that can be shown to be both \emph{explorative} and \emph{stable} theoretically (Section~\ref{Critic Bounding}) and empirically (Section~\ref{Evaluation in Sparse-Reward Environments}).
Note that when not using intrinsic reward, the two critics are degraded to be identical to one another (i.e. $\rewards = \rewards'$) and in practice when that happens we only store one of them.

To better appreciate our method, it is not enough to only gain an overview about our actors and critics in isolation. Given this, we then formalize the connections between the actors and the critics as well as the objectives that are optimized during the model update phase (Figure~\ref{fig:ADAC_overview}). As defined above, $\pi$ is the exploitation policy that aims to maintain optimality throughout the learning process, which is best optimized using the deterministic policy gradient (Eq~\eqref{Eq:deterministic-policy-gradient}), where $Q_{\rewards}^{\pi}$ is used as the referred critic (\textcircled{1} in Figure~\ref{fig:ADAC_overview}). On the other hand, for the sake of expressiveness, the energy-based objective (Eq~\eqref{eq:energy_based_behavior_policy}) is a good fit for $\mu$. To further encourage exploration, we use the behavior critic $Q_{\rewards'}^{\pi}$ in the objective, which gives $\mu (\action \mid \rlstate) \propto \exp (Q_{\rewards'}^{\pi} (\rlstate, \action))$ (\textcircled{2} in Figure~\ref{fig:ADAC_overview}). Since both policies share the same network $f$, the actor optimization process (\textcircled{3} in Figure~\ref{fig:ADAC_overview}) is done by maximizing
    \begin{align}
        J_{\pi} (\varphi) + J_{\mu} (\varphi),
        \label{eq:objective_of_f}
    \end{align}
\noindent where the gradients of both terms are defined by Eqs~\eqref{Eq:deterministic-policy-gradient} and \eqref{eq:SVGD_gradient}, respectively. In particular, we set $\pi (s) := f_{\varphi} (s, \xi) |_{\xi = [0,0,\dots,0]^{T}}$ in Eq~\eqref{Eq:deterministic-policy-gradient} and $Q := Q_{\rewards'}^{\pi}$ in Eq~\eqref{eq:SVGD_gradient}. As illustrated in Algorithm~\ref{alg:ADAC_high_level} (line 5), we update $Q^{\pi}_{\rewards}$ and $Q^{\pi}_{\rewards'}$ with the target $\mathcal{T}^{\pi} Q^{\pi}_{\rewards}$ and $\mathcal{T}^{\pi} Q^{\pi}_{\rewards'}$ on the collected samples using the mean squared error loss, respectively.

In the sample collection phase, $\mu$ interacts with the environment and the gathered samples are stored in a replay buffer \cite{mnih2013playing} for later use in the model update phase. Given state $s$, actions are sampled from $\mu$ with a three-step procedure: (i) sample $\xi \sim \mathcal{N} (\mathbf{0}, \mathbf{I})$, (ii) plug the sampled $\xi$ in $f_{\varphi} (s, \xi)$ to get its output $\hat{a}$, and (iii) regard $\hat{a}$ as the center of kernel $\mathcal{K} (\cdot, \hat{a})$\footnotemark[1] and sample an action $a$ from it.

On the implementation side, ADAC is compatible with any existing off-policy actor-critic model for continuous control: it directly builds upon them by inheriting their actor $\pi$ (which is also their target policy) and critic $Q_{\rewards}^{\pi}$. To be more specific, ADAC merely adds a new actor $\mu$ to interact with the environment and a new critic $Q_{\rewards'}^{\pi}$ that guides $\mu$'s updates on top of the base model, along with the constraints/connections enforced between the inherented and the new actor and between the inherent and the new critic (i.e. policy co-training and critic bounding). In other words, modifications made by ADAC would not conflict with the originally proposed improvements on the base model. In our experiments, two base models (i.e. DDPG \cite{lillicrap2015continuous} and TD3 \cite{fujimoto2018addressing}) are adopted.\footnotemark[5] \footnotetext[5]{See Appendix~\ref{Algorithmic Details of ADAC} for the pseudo-code and detailed description of ADAC.}

\setlength{\textfloatsep}{0.7em}
\begin{algorithm}[t]
\caption{The model update phase of ADAC. Correspondence with Figure \ref{fig:ADAC_overview} is given after ``//''.}
\label{alg:ADAC_high_level}
{\fontsize{8}{8} \selectfont
\begin{algorithmic}[1]

\STATE{\textbf{Input:} A minibatch of samples $\mathcal{B}$, actor model $f_{\varphi}$ (represents the target policy $f^{\pi}_{\varphi}$ as well as the behavior policy $f^{\mu}_{\varphi})$, critic models $Q^{\pi}_{\rewards}$ and $Q^{\pi}_{\rewards}$.}

\STATE{$\nabla_{\varphi} f^{\pi}_{\varphi}$ $\leftarrow$ the deterministic policy gradient of $Q^{\pi}_{\rewards}$ with respect to $\pi$ (Eq~\eqref{Eq:deterministic-policy-gradient}). \textcolor[RGB]{116,116,116}{\textbf{// target policy update}}}

\STATE{$\nabla_{\varphi} f^{\mu}_{\varphi}$ $\leftarrow$ gradient of $Q^{\pi}_{\rewards'}$ with respect to the behavior policy $\mu$ (Eq \eqref{eq:SVGD_gradient}, Section \ref{Expressive Behavior Policy by Energy-based Representations}) \textcolor[RGB]{74,119,47}{\textbf{// behavior policy learning}}}

\STATE{Update $f$ with $\nabla_{\varphi} f^{\pi}_{\varphi}$ and $\nabla_{\varphi} f^{\mu}_{\varphi}$ \textcolor[RGB]{40,106,173}{\textbf{// policy co-training}}}

\STATE{Update $Q^{\pi}_{\phi}$ and $Q^{\mu}_{\psi}$ to minimize the mean squared error on $\mathcal{B}$ with respect to the target $\mathcal{T}^{\pi} Q^{\pi}_{\rewards}$ and $\mathcal{T}^{\pi} Q^{\pi}_{\rewards'}$, respectively. \textcolor[RGB]{190,79,18}{\textbf{// value update with critic bounding}}}

\end{algorithmic}
}
\end{algorithm}
\setlength{\dbltextfloatsep}{\dbltextfloatsepsave}

\subsection{Stabilizing Policy Updates by Policy Co-training}
\label{Policy Co-training}

Although a behavior policy by Eq~\eqref{eq:energy_based_behavior_policy} is sufficiently expressive to capture potentially rewarding actions, it may still not be helpful for learning a better $\pi$: being expressive also means that $\mu$ is often significantly different from $\pi$, leading to collect samples that can substantially bias $\pi$'s updates (recall the discussion about Equation~\ref{Eq:deterministic-policy-gradient}), and in turn rendering the learning process of $Q^{\pi}_{\rewards}$ unstable and vulnerable to catastrophic failure \cite{sutton2008convergent,schlegel2019importance,fujimoto2018off}. To be more specific, since the difference between $\pi$ and an expressive $\mu$ is  more than some zero-mean random noise, the state marginal distribution ($\rho_{\mu}$) defined with respect to $\mu$ can potentially diverge greatly from that ($\rho_{\pi}$) defined with respect to $\pi$. Since $\rho_{\pi}$ is not directly accessible, 
as shown in Eq~\eqref{Eq:deterministic-policy-gradient}, the gradients of $\pi$ are approximated using samples from $\rho_{\mu}$. When the approximated gradients constantly deviate significantly from the true values (i.e. the approximated gradients are biased), the updates to $\pi$ essentially become inaccurate and hence ineffective. This suggests that a brutal act of disentangling the behavior policy from the target policy alone is not a guarantee of improved training efficiency or final performance.

Therefore, to mitigate the aforementioned problem, we would like to reduce the distance between $\mu$ and $\pi$, which naturally reduces the KL-divergence between distribution $\rho_{\mu}$ and $\rho_{\pi}$.
One straightforward approach to reduce the distance between the two policies is to restrict the randomness of $\mu$, for example by lowering the entropy of the behavior policy $\mu$ through a smaller $\beta$ (Eq~\eqref{eq:SVGD_gradient}). However, this inevitably sacrifices $\mu$'s expressiveness, which in turn would also harm ADAC's  competitiveness. 
Alternatively, we propose \emph{policy co-training} to best maintain the expressiveness of $\mu$ while also stabilizing it by restricting it with regards to $\pi$, which is motivated by the intrinsic connection between Eqs~\eqref{Eq:deterministic-policy-gradient} and \eqref{eq:SVGD_gradient} (see the $2nd$ paragraph of Section~\ref{Expressive Behavior Policy by Energy-based Representations}).
We reiterate here that in a nutshell, both policies are modeled by the same network $f$ and are distinguished only by their different inputs to $\xi$. During training, $f$ is updated to maximize Eq~\eqref{eq:objective_of_f}. The method to sample actions from $\mu$ is described in the $5th$ paragraph of Section~\ref{Algorithm Overview}.

We further justify the above choice by demonstrating that the imposed restrictions on $\mu$ and $\pi$ only have minor influence on $\pi$'s optimality and $\mu$'s expressiveness. To argue for this point, we need to revisit Eq~\eqref{eq:SVGD_gradient} for one more time: $\pi$ can be viewed as being updated with $\beta = 0$, whereas $\mu$ is updated with $\beta > 0$. Intuitively, this makes policy $\pi$ optimal since its action is not affected by the entropy maximization term (i.e. the second term). $\mu$ is still expressive since only when the input random variable $\xi$ is close to the zero vector, it will be significantly restricted by $\pi$. In Section~\ref{Analysis of the stochastic behavior policy},  we will empirically demonstrate policy co-training indeed reduces the distance between $\mu$ and $\pi$ during training, fulfilling its mission.

Additionally, \emph{policy co-training} enforces the underlying relations between $\pi$ and $\mu$. Specifically, policy co-training forces $\pi$ to be contained in $\mu$ since $[0, 0, \dots, 0]^{T}$ is the highest-density point of $\mathcal{N}(\mathbf{0}, \mathbf{I})$, and sampling $\xi$ from $\mathcal{N}(\mathbf{0}, \mathbf{I})$ is likely to generate actions close to that from $\pi$. This matches the intuition that $\pi$ and $\mu$ should share similarities: actions proposed by $\pi$ is rewarding (with respect to $\rewards$) and thus should be frequently executed by $\mu$.

\subsection{Incorporating Intrinsic Reward in Behavior Critic via Critic Bounding}
\label{Critic Bounding}

With the help of disentanglement as well as \emph{policy co-training}, we manage to design an expressive behavior policy that not only explores effectively but also helps stabilize $\pi$'s learning process. In this subsection, we aim to achieve the same goal -- stability and expressiveness -- on a different subject, the behavior critic $Q^{\pi}_{\rewards'}$.

As introduced in Section~\ref{Algorithm Overview}, $\rewards$ is the environment-defined reward function, while $\rewards'$ consists of an additional exploration-oriented intrinsic reward $\rewards^{in}$. As hinted by the notations, ADAC's target critic $Q_{\rewards}^{\pi}$ and behavior critic $Q_{\rewards'}^{\pi}$ are defined with regard to the same policy but updated differently according to the following
    \begin{equation}
        \label{Eq:critic_objective}
        Q_{\rewards}^{\pi} \leftarrow \mathcal{T}^{\pi}_{\rewards} Q_{\rewards}^{\pi}; \quad Q_{\rewards'}^{\pi} \leftarrow \mathcal{T}^{\pi}_{\rewards'} Q_{\rewards'}^{\pi},
    \end{equation}
\noindent where updates are performed through minibatches in practice. Note that when no intrinsic reward is used, Eq~\eqref{Eq:critic_objective} becomes trivial and the two critics ($Q_{\rewards}^{\pi}$ and $Q_{\rewards'}^{\pi}$) are identical. Therefore, we only consider the case where intrinsic reward exists in the following discussion.

While it is natural that the target critic is updated using the target policy, it may seem counterintuitive that the behavior critic is also updated using the target policy. Given that $\mu$ is updated following the guidance (i.e. through the energy-based objective) of $Q_{\rewards'}^{\pi}$, we do so to prevent $\mu$ from diverging disastrously from $\pi$.
\begin{theorem}
\label{theorem:critic_theorem}
Let $\pi$ be a greedy policy w.r.t. $Q^{\pi}_{\rewards}$ and $\mu$ be a greedy policy w.r.t. $Q^{\pi}_{\rewards'}$. Assume $Q^{\pi}_{\rewards'}$ is optimal w.r.t. $\mathcal{T}^{\pi}_{\rewards'}$ and $\rewards' (s, a) \geq \rewards (s, a) \; (\forall s, a \in \states \times \actions)$. We~~have~~the~~following~~results.

First, \noindent $\mathbb{E}_{\rho_{\pi}} [\mathcal{T}^{max}_{\rewards} Q^{\pi}_{\rewards} - Q^{\pi}_{\rewards}]$, a proxy of training stability, is lower bounded by 
    \begin{align}
        \mathbb{E}_{\rho_{\mu}} [\mathcal{T}^{max}_{\rewards} Q^{\pi}_{\rewards} - Q^{\pi}_{\rewards}] + \mathbb{E}_{\rho_{\pi}} [\rewards] - \mathbb{E}_{\rho_{\mu}} [\rewards]. \label{eq: first theoretical result}
    \end{align}
    
Second, $\mathbb{E}_{\rho_{\mu}} [\mathcal{T}^{max}_{\rewards} Q^{\pi}_{\rewards} - Q^{\pi}_{\rewards}]$, a proxy of training effectiveness, is lower bounded by 
    \begin{align}
        \mathbb{E}_{\rho_{\pi}} [\mathcal{T}^{max}_{\rewards} Q^{\pi}_{\rewards} - Q^{\pi}_{\rewards}] + \mathbb{E}_{\rho_{\pi}} [\rewards - \rewards']. \label{eq: second theoretical result}
    \end{align}
\end{theorem}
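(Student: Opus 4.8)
The plan is to analyze each residual $\mathbb{E}_{\rho_{\nu}}[\mathcal{T}^{max}_{\rewards} Q^{\pi}_{\rewards} - Q^{\pi}_{\rewards}]$, $\nu\in\{\pi,\mu\}$, by expanding the operator and peeling off the explicit reward from the bootstrapped value. Writing $V^{\pi}_{\rewards}(s):=\max_{\action}Q^{\pi}_{\rewards}(s,\action)$ and using $\mathcal{T}^{max}_{\rewards}Q^{\pi}_{\rewards}(s,\action)=\rewards(s,\action)+\gamma\,\mathbb{E}_{s'\sim\transitions}[V^{\pi}_{\rewards}(s')]$, I would split
\[ \mathbb{E}_{\rho_{\nu}}[\mathcal{T}^{max}_{\rewards}Q^{\pi}_{\rewards}-Q^{\pi}_{\rewards}]=\mathbb{E}_{\rho_{\nu}}[\rewards]+B_{\nu},\quad B_{\nu}:=\mathbb{E}_{\rho_{\nu}}\big[\gamma\,\mathbb{E}_{s'}[V^{\pi}_{\rewards}(s')]-Q^{\pi}_{\rewards}(s,\action)\big]. \]
The explicit reward term already accounts for the $\mathbb{E}_{\rho_{\pi}}[\rewards]-\mathbb{E}_{\rho_{\mu}}[\rewards]$ contribution in the first bound, so the task collapses to comparing the two bootstrapped residuals $B_{\pi}$ and $B_{\mu}$.

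The tool I would use for $B_{\nu}$ is an occupancy telescoping identity: for a policy $\nu$ with discounted state visitation $d_{\nu}$ (so $\rho_{\nu}(s,\action)=d_{\nu}(s)\,\nu(\action\mid s)$) and any state function $f$, one has $\mathbb{E}_{\rho_{\nu}}[\gamma\,\mathbb{E}_{s'}[f(s')]-f(s)]=-(1-\gamma)\,\mathbb{E}_{s_{0}\sim\rho_{0}}[f(s_{0})]$, obtained by unrolling $d_{\nu}(s)=(1-\gamma)\sum_{t}\gamma^{t}\transitions(s_{t}=s)$ and cancelling adjacent time indices. Applied with $f=V^{\pi}_{\rewards}$, this collapses both $B_{\pi}$ and $B_{\mu}$ onto the same initial-state anchor $\mathbb{E}_{\rho_{0}}[V^{\pi}_{\rewards}]$, which is exactly what makes expectations taken under the two different occupancies commensurable. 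The two measures then differ only through the subtracted term $Q^{\pi}_{\rewards}(s,\action)$: under $\rho_{\pi}$, greediness of $\pi$ gives $Q^{\pi}_{\rewards}(s,\pi(s))=V^{\pi}_{\rewards}(s)$ so the identity holds with equality, whereas under $\rho_{\mu}$ we only have $Q^{\pi}_{\rewards}(s,\mu(s))\le V^{\pi}_{\rewards}(s)$ (since $\mu$ is greedy for $Q^{\pi}_{\rewards'}$, not $Q^{\pi}_{\rewards}$), turning the identity into a one-sided estimate. Combining the resulting comparison of $B_{\pi}$ and $B_{\mu}$ with the separated reward difference, once the greedy inequality is oriented correctly, yields the first bound.

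For the second bound I would bring in the two so-far-unused hypotheses. Since $Q^{\pi}_{\rewards'}$ is a fixed point of $\mathcal{T}^{\pi}_{\rewards'}$ and $\rewards'\ge\rewards$ pointwise, monotonicity of the Bellman operator gives $Q^{\pi}_{\rewards'}\ge Q^{\pi}_{\rewards}$, with the gap equal to the discounted accumulation of $\rewards'-\rewards$ along $\pi$; and because $\mu$ is \emph{exactly} greedy for $Q^{\pi}_{\rewards'}$, running the same telescoping identity through $V^{\pi}_{\rewards'}$ lets me re-express the slack $V^{\pi}_{\rewards}(s)-Q^{\pi}_{\rewards}(s,\mu(s))$ in terms of $\rewards'$-world quantities. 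Converting back to the $\rewards$-world should then leave precisely the correction $\mathbb{E}_{\rho_{\pi}}[\rewards-\rewards']$, producing the second lower bound.

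The step I expect to be the main obstacle is the occupancy mismatch between $\rho_{\pi}$ and $\rho_{\mu}$: $B_{\pi}$ and $B_{\mu}$ are expectations of the same integrand under different visitation measures, so no pointwise comparison is available and a careless term-by-term bound points the wrong way. The telescoping identity removes this by anchoring both to $\mathbb{E}_{\rho_{0}}[V^{\pi}_{\rewards}]$, but then the entire argument hinges on tracking which greedy relation applies under which measure — $\pi$ greedy for $Q^{\pi}_{\rewards}$ versus $\mu$ greedy for $Q^{\pi}_{\rewards'}$ — and in which direction each induced inequality points. Keeping these signs consistent, especially while shuttling between the $\rewards$- and $\rewards'$-worlds in the second bound, is the delicate part of the proof.
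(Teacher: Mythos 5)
Your telescoping identity and the decomposition $\mathbb{E}_{\rho_{\nu}}[\mathcal{T}^{max}_{\rewards}Q^{\pi}_{\rewards}-Q^{\pi}_{\rewards}]=\mathbb{E}_{\rho_{\nu}}[\rewards]+B_{\nu}$ are both correct, but carried through honestly they prove the \emph{reverse} of the first bound. Anchoring at the initial state, greediness of $\pi$ gives $B_{\pi}=-(1-\gamma)\mathbb{E}_{\rho_{0}}[V^{\pi}_{\rewards}]$ with equality, while $Q^{\pi}_{\rewards}(s,\mu(s))\le V^{\pi}_{\rewards}(s)$ gives $B_{\mu}\ge-(1-\gamma)\mathbb{E}_{\rho_{0}}[V^{\pi}_{\rewards}]$; hence $B_{\mu}\ge B_{\pi}$, which recombines to $\mathbb{E}_{\rho_{\pi}}[\mathcal{T}^{max}_{\rewards}Q^{\pi}_{\rewards}-Q^{\pi}_{\rewards}]\le\mathbb{E}_{\rho_{\mu}}[\mathcal{T}^{max}_{\rewards}Q^{\pi}_{\rewards}-Q^{\pi}_{\rewards}]+\mathbb{E}_{\rho_{\pi}}[\rewards]-\mathbb{E}_{\rho_{\mu}}[\rewards]$ --- an upper bound where a lower bound is claimed. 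Your caveat ``once the greedy inequality is oriented correctly'' cannot rescue this: the orientation is forced by $Q^{\pi}_{\rewards}(s,a)\le\max_{a'}Q^{\pi}_{\rewards}(s,a')$ and it points the wrong way. The paper takes an entirely different route: it proves an operator identity (its Lemma~\ref{lemma: foundations used in proving the theorem}) expressing $Q^{\mu}_{\rewards'}-Q^{\pi}_{*}$ --- where $Q^{\pi}_{*}$ and $Q^{\mu}_{\rewards'}$ are the fixed points of $\mathcal{T}^{\pi}_{\rewards}$ and $\mathcal{T}^{\mu}_{\rewards'}$ --- in terms of $\bigl[(\mathcal{I}-\gamma\mathcal{P}^{\mu})^{-1}-(\mathcal{I}-\gamma\mathcal{P}^{\pi})^{-1}\bigr](\mathcal{T}^{max}_{\rewards}Q^{\pi}_{\rewards}-Q^{\pi}_{\rewards})$ and then bounds the residual terms; your elementary argument bypasses those auxiliary fixed points, which is exactly why it lands on an oppositely-signed inequality. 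At minimum you must either close this gap or identify what additional structure the claimed direction relies on.

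The second bound is also not actually derived. In $\mathbb{E}_{\rho_{\mu}}[\mathcal{T}^{max}_{\rewards}Q^{\pi}_{\rewards}-Q^{\pi}_{\rewards}]$ the augmented reward $\rewards'$ appears nowhere except implicitly through $\mu$ being greedy for $Q^{\pi}_{\rewards'}$, so a concrete mechanism is needed to convert the slack $V^{\pi}_{\rewards}(s)-Q^{\pi}_{\rewards}(s,\mu(s))$ into a statement about $\rewards'-\rewards$; the sentence ``converting back to the $\rewards$-world should then leave precisely the correction'' is where the entire content of the bound lives, and it is left unproved. Moreover, your claim that the gap $Q^{\pi}_{\rewards'}-Q^{\pi}_{\rewards}$ equals the discounted accumulation of $\rewards'-\rewards$ along $\pi$ conflates $Q^{\pi}_{\rewards}$ with the fixed point of $\mathcal{T}^{\pi}_{\rewards}$: the theorem only assumes $Q^{\pi}_{\rewards'}$ is a fixed point (of $\mathcal{T}^{\pi}_{\rewards'}$), while $Q^{\pi}_{\rewards}$ is an arbitrary estimate with $\mathcal{T}^{\pi}_{\rewards}Q^{\pi}_{\rewards}\neq Q^{\pi}_{\rewards}$ in general --- the nonzero residual is the whole point of the statement. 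The paper injects the $\rewards-\rewards'$ term via $Q^{\mu}_{\rewards'}-Q^{\pi}_{*}=(\mathcal{I}-\gamma\mathcal{P}^{\pi})^{-1}(\rewards'-\rewards)$ together with the inequality $\mathcal{T}^{\mu}_{\rewards'}Q^{\pi}_{\rewards}\ge\mathcal{T}^{max}_{\rewards}Q^{\pi}_{\rewards}$; you would need an analogous explicit step.
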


We first examine its assumptions. While others are generally satisfiable and are commonly made in the RL literature \cite{munos2007performance}, the assumption on the rewards --- $\forall s, \!a \!\in\! \states \times \actions, \rewards'(s, \!a) \!\geq\! \rewards(s, \!a)$ -- seems restrictive. However, since most intrinsic rewards are strictly greater than zero (e.g., \cite{houthooft2016vime,fu2017ex2}), it can be easily satisfied in practice.

The full proof is deferred to the longer version of this paper. Here, we only focus on the insights conveyed by Theorem~\ref{theorem:critic_theorem}. According to the definition of the Bellman optimality operator (Section~\ref{Reinforcement Learning in Continuous Action Spaces}), $\expectation_{\rho} [ \mathcal{T}^{\mathrm{max}}_{\rewards} Q^{\pi}_{\rewards} - Q^{\pi}_{\rewards} ]$ quantifies the improvement on $Q^{\pi}_{\rewards}$ after performing one value iteration step (w.r.t. $\rewards$) \cite{bellman1966dynamic}. Depending on the state-action distribution $\rho$ used to compute expectation, this quantity becomes a proxy of different measures. Specifically, $\expectation_{\rho_{\pi}} [ \mathcal{T}^{\mathrm{max}}_{\rewards} Q^{\pi}_{\rewards} - Q^{\pi}_{\rewards} ]$ (where the expectation is calculated w.r.t. $\rho_{\pi}$) represents the expected improvement of the target policy, which is our ultimate learning goal and hence is a proxy of \emph{training stability} given learning is stable if non-decreasing.  $\expectation_{\rho_{\mu}} [ \mathcal{T}^{\mathrm{max}}_{\rewards} Q^{\pi}_{\rewards} - Q^{\pi}_{\rewards} ]$ (where the expectation is with regard to $\rho_{\mu}$) measures the training \emph{effectiveness} in the sense that the better $\mu$ performs (i.e. $\expectation_{\rho_{\mu}} [ Q^{\pi}_{\rewards} ]$), the higher the quality of collected samples, given $\mu$ is used to interact with the environment.

Since both proxies measure the near-future improvement of $Q^{\pi}_{\rewards}$, higher values are preferable. 
In particular, if of both the lower bounds are guaranteed to be positive, ADAC's performance will be monotonously increasing during training even with intrinsic reward. 
Note that the lower bound of both proxies are inter-defined on each other through their first terms, we are not able to comment on the exact value of either alone. Still, we can share several encouraging observations, which are related to the remaining parts of both bounds.
First, the remaining part of Eq~\eqref{eq: first theoretical result} (i.e. $\expectation_{\rho_{\pi}} [\rewards] - \expectation_{\rho_{\mu}} [\rewards]$) is always non-negative since $\pi$ is optimized to maximize the cumulative reward of $\rewards$ while $\mu$ is not. 
This suggests that even if $\expectation_{\rho_{\mu}} [ \mathcal{T}^{\mathrm{max}}_{\rewards} Q^{\pi}_{\rewards} - Q^{\pi}_{\rewards} ]$ is negative, Eq~\eqref{eq: first theoretical result} might still be positive, which guarantees training stability.
Next, we demonstrate that the remaining term of Eq~\eqref{eq: second theoretical result} (i.e. $\expectation_{\rho_{\pi}} [ \rewards - \rewards' ]$) is very likely to be high during ADAC's training process. 
Promised by the policy co-training approach (Section~\ref{Policy Co-training}), when following $\mu$ we would frequently visit high-probability state-action pairs in $\rho_{\pi}$.
Since most intrinsic reward functions are designed to assign small rewards to states that are frequently visited, $\mathbb{E}_{\rho_{\pi}} [\rewards' - \rewards] = \mathbb{E}_{\rho_{\pi}} [\rewards^{in}]$ (the negation of the remaining term of Eq~\eqref{eq: second theoretical result}) would be small. 
Finally, bringing the two bounds together, we can show that when $\pi$ and $\mu$ are jointly optimized with the presence of intrinsic rewards, the performance (w.r.t. the environment-defined reward $\rewards$) of both policies is very unlikely to drop catastrophically, thanks to the remaining terms of both bounds, which are either positive or very likely to be high. 

\section{Experiments}
\label{experiments}

In this section, we take gradual steps to analyze and illustrate our proposed method ADAC. Specifically, We first investigate the behavior of our analogous disentangled behavior policy $\mu$ (Section~\ref{Analysis of the stochastic behavior policy}). Next, we perform an empirical evaluation of ADAC without intrinsic rewards on 14 standard continuous-control benchmarks (Section~\ref{Benchmark tests}). Finally, encouraged by its promising performance and to further justify the critic bounding method, we examine ADAC with intrinsic rewards in 4 sparse-reward and hence exploration-heavy environments  (Section~\ref{Evaluation in Sparse-Reward Environments}). Throughout this paper, we highlight two benefits from the analogous disentangled nature of ADAC: (i) avoiding unnecessary trade-offs between current optimality and exploration (i.e. a more expressive and effective behavior policy); (ii) natural compatibility with intrinsic rewards without altering environment-defined optimality. In this context, the first two subsections are devoted to demonstrating the first benefit and the last subsection is dedicated for the second.

\subsection{Analysis of Analogous Disentangled Behavior Policy}
\label{Analysis of the stochastic behavior policy}

Since we are largely motivated by the potential luxury of designing an expressive exploration strategy offered by the disentangled nature of our framework, it is natural we are first interested in investigating how well our behavior policy lives up to this expectation.
Yet as discussed in Section 4.2, in order to aid stable policy updates, we specifically put some restrains on our behavior policy, deliberately making it analogous of the target policy, which means our behavior policy may not be as expressive as otherwise. Given this, we start with investigating whether our behavior policy is still expressive enough, which is measured by its \emph{coverage} (i.e. does it explore a wide enough action/policy space outside the current target policy). To further examine the influence of our added constraints, we study the policy network's \emph{stability} (i.e. does the \emph{policy co-training} lowers the bias between two policies and stabilize $\pi$'s learning process). Finally, we focus on the effectiveness of our behavior policy by measuring the overall \emph{performance} of ADAC  (i.e. does ADAC's exploration strategy efficiently lead the target policy to iteratively converge to a more desirable local optimum).

\begin{figure}[t]
    \centering
    \includegraphics[width=\columnwidth]{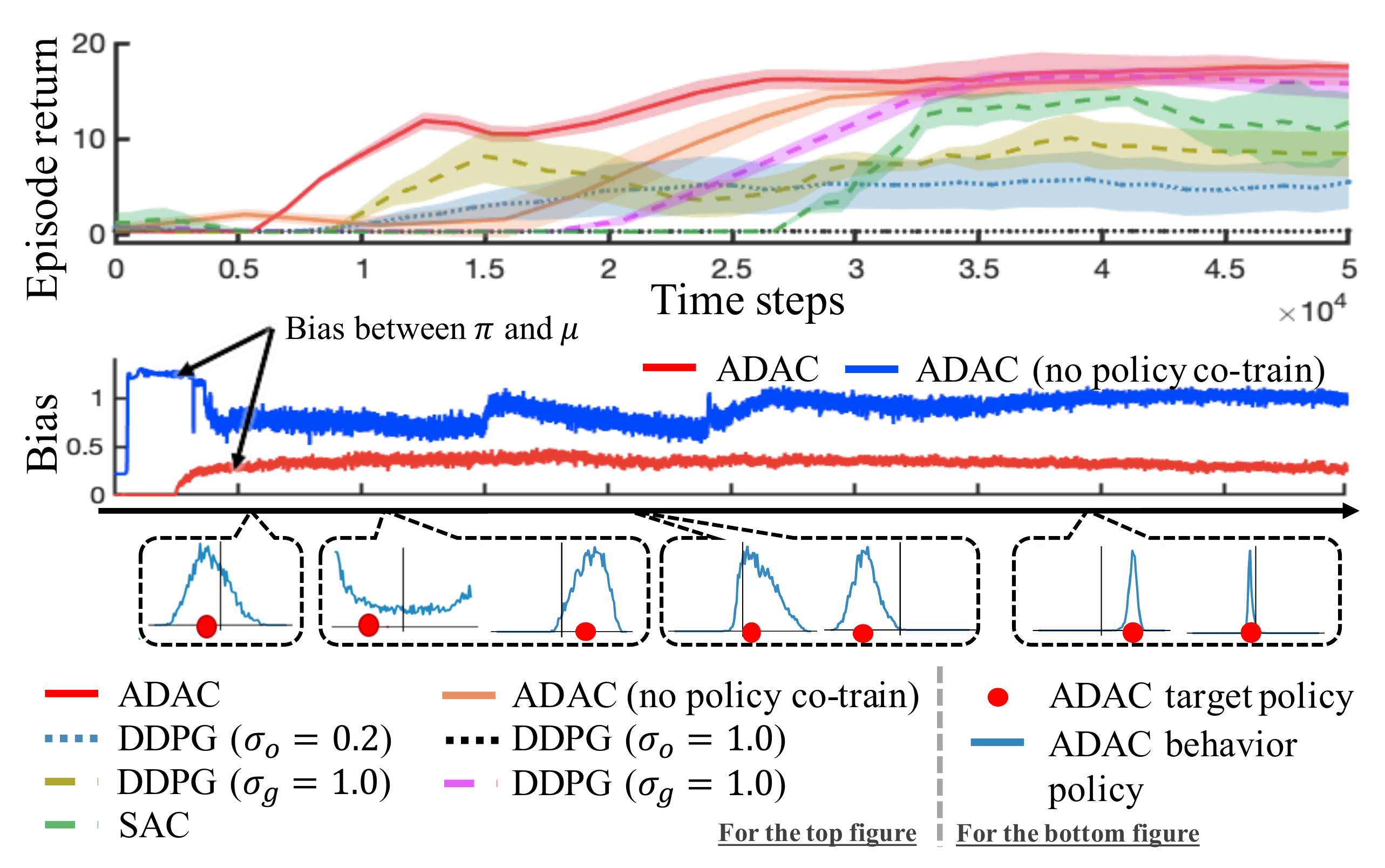}
    \caption{Learning curves of ADAC (with base model DDPG) and baselines on the modified CartPole environment. In addition, ADAC's target (red dots) and behavior policies (solid blue curves) at different timesteps are plotted below the learning curves.}
    \label{fig:behavior_policy_illustration}
\end{figure} 

\textbf{Setup} $\;$
For the sake of easy illustration, we choose a straightforward environment, namely CartPole \cite{openaigym}, as our demonstration bed. The goal in this environment is to balance the pole that is attached to a cart by applying left/right force. For the compatibility with continuous control and a better modeling of real-world implications, we modify CartPole's original discrete action space and add effort penalty to the rewards as specified in Table~\ref{table_cartpole}.
To demonstrate the advantages of our behavior policy, we choose DDPG with two commonly-used existing exploration strategies as the main baselines, i.e., Gaussian noise ($\sigma_{g}$) and Ornstein-Uhlenbeck process noise ($\sigma_{o}$) \cite{UOProcess}, both with two variance levels $0.2$ and $1.0$. For the virtue of fair comparison, we only present DDPG-based ADAC here (or simply ADAC later in this subsection). To further demonstrate the benefits from disentanglement, we choose SAC as another baseline. As discussed earlier in related works, SAC similarly also utilizes energy-based policies, yet opposite to our approach its exploration is embedded into its target policy.

\textbf{Empirical Insights} $\;$ To minimize distraction, our discussion starts with closely examining ADAC's behavior and target policy alone. First, see the cells at the bottom of Figure~\ref{fig:behavior_policy_illustration}, which are snapshots of the behavior and target policy at different training stages. As suggested by the wide bell shape of the solid blue curves ($\mu$) at the first cell, our behavior policy acts curiously when ignorant about the environment, extensively exploring all possible actions including those that are far away from the target policy (represented by the red dots). Yet having such a broad coverage alone is still not sufficient to overcome the beginning trap of getting stuck in the deceiving local-optimum of constantly exerting $a \in [-0.5, 0.5]$. As suggested by the bimodal shape of the solid blue curve ($\mu$) in the second cell, after acquiring a preliminary understanding of the environment the agent starts to form preference for some actions when exploring. Almost at the same time, the target policy no longer stays close to 0.0 (represented by the intersection of the two axes), suggesting that the behavior policy is effective in leading the target policy towards a more desirable place. This can be further corroborated by what is suggested from the third and fourth cell. In the late stage, besides being able to balance the pole, our agent even manages to learn exerting actions with small absolute value from time to time to avoid the effort penalty.

Other than its expressiveness, stability critically influences ADAC's overall performance, which is by design controlled by the proposed \emph{policy co-training} approach. To examine its effect, we perform an ablation study about it. To be more specific, we compare ADAC with ADAC without \emph{policy co-training}.\footnotemark[6] \footnotetext[6]{
When policy co-training is not used, we make two copies of ADAC's original policy network (which encodes both $\pi$ and $\mu$) to represent $\pi$ and $\mu$, respectively. Gradients of the policies are applied only to their corresponding networks.
} The effect of critic bounding is measured by the \emph{bias} between $\pi$ and $\mu$, which is shown in the middle of Figure~\ref{fig:behavior_policy_illustration}. We can see that ADAC has much lower bias than its variant without policy co-training. Additionally, policy co-training does not affect the expressiveness of $\mu$, which is suggested by the behavior policies rendered below in Figure~\ref{fig:behavior_policy_illustration}. 

Finally we move our attention to the learning curves in Figure~\ref{fig:behavior_policy_illustration}: ADAC exceeds baselines in both learning efficiency (i.e. being the first to consistently accumulate positive rewards) and final performance. Unlike our  behavior policy, exploration through random noise is unguided, resulting in either wasted exploration on unpromising regions or insufficient exploration on rewarding areas. This largely explains the noticeable performance gap between DDPG with random noise and ADAC. On the other side, SAC bears an expressive policy similar to our behavior policy. However, suffering from no separate behavior policy, to aid exploration, SAC has to consistently take sub-optimal actions into account, adversely affecting its policy improvement process. In other words, different from ADAC, SAC cannot fully exploits its learned knowledge of the environment (i.e. its value functions) to construct its target policy, leading to a performance inferior to ADAC's.

\begin{table*}[t]
\caption{Specifications of our action and reward designs for the modified CartPole task. The original task consists of two discrete actions \emph{left} and \emph{right}, each pushing the cart towards its corresponding direction. We converted them into a single-dimension continuous action.}
\centering
{\fontsize{9}{9}\selectfont
\begin{tabular}{cc}
    \toprule
    Action ($a \in [-1,1]$) & Reward ($r \in \mathbb{R}$) \\
    \midrule\midrule
    $a = \left \{ \begin{array}{cc}{\text { left }} & {a<-0.5} \\ {p(\text {left})=p(\text {right})=0.5} & {a \in [0.5,0.5]} \\ {\text { right }} & {a>0.5}\end{array}\right.$ & $r = - 0.1|a| - 0.05a^{2} + \left\{\begin{array}{cc}{-1.0} & {\text { episode ended }} \\ {0.1} & {\text { otherwise }}\end{array}\right.$ \\
    \bottomrule
    
\end{tabular}
}
\label{table_cartpole}
\end{table*}

\begin{table*}
\caption{Continuous-control performance in 14 benchmark environments. Average episode return ($\pm$ standard deviation) over 20 trials are reported. Bold indicates the best average episode return. $\dagger$ indicates the better performance between ADAC(TD3) and its base model TD3. Similarly, $*$ indicates the better performance between ADAC (DDPG) and its base model DDPG. In all three cases, values that are statistically insignificantly different (>0.05 in t-test) from the respective should-be indicated ones are denoted as well.}
\label{table:benchmark}
\centering
{\fontsize{9}{9}\selectfont

\begin{tabular}{ccccccc}
\toprule
Environment&ADAC (TD3)& ADAC (DDPG)& TD3& DDPG& SAC& PPO\\
\midrule\midrule
RoboschoolAnt& 2219$\pm$373& 838.1*$\pm$97.1& \textbf{2903}$\dagger\pm$666& 450.0$\pm$27.9& \textbf{2726}$\pm$652& 1280$\pm$71\\

RoboschoolHopper& \textbf{2299}$\dagger\pm$333 & 766.5*$\pm$10 & \textbf{2302}$\dagger\pm$537 & 543.8$\pm$307 & 2089$\pm$657 & 1229$\pm$345\\

RoboschoolHalfCheetah& 1578$\dagger\pm$166 & \textbf{1711}*$\pm$95& 607.2$\pm$246.2 & 441.6$\pm$120.4 & 807.0$\pm$252.6 & 1225$\pm$184.2\\

RoboschoolAtlasForwardWalk& \textbf{234.6}$\dagger \pm$55.7 & 186.7*$\pm$37.9 & 190.6$\pm$50.1& 52.63$\pm$26.2& 126.0$\pm$47.1& 107.6$\pm$29.4\\

RoboschoolWalker2d& \textbf{1769}$\dagger \pm$452 & \textbf{1564}*$\pm$651 & 995.1$\pm$146.3 & 208.7$\pm$137.1 & 1021$\pm$263 & 578.9$\pm$231.3\\

Ant& 3353$\pm$847 & 1226*$\pm$18 & 4034$\dagger \pm$517 & 370.5$\pm$223 & \textbf{4291}$\pm$1498 & 1401$\pm$168\\

Hopper& \textbf{3598}$\dagger\pm$ 374 & 374.5*$\pm$36.5& 2845$\pm$609& 38.93$\pm$0.88& \textbf{3307}$\pm$825& 1555$\pm$458\\

HalfCheetah& 9392$\pm$199& 2238*$\pm$40& 10526$\dagger\pm$2367& 1009$\pm$49& \textbf{11541}$\pm$2989& 881.7$\pm$10.1\\

Walker2d& \textbf{5122$\dagger$}$\pm$1314& 1291*$\pm$42& 4630$\dagger\pm$778& 186.2$\pm$33.3& 4067$\pm$1211& 1146$\pm$368\\

InvertedPendulum& \textbf{1000$\dagger$}$\pm$0& \textbf{1000*}$\pm$0& \textbf{1000$\dagger$}$\pm$0& \textbf{1000*}$\pm$0& \textbf{1000}$\pm$0& 98.90$\pm$2.08\\

InvertedDoublePendulum& \textbf{9359$\dagger$}$\pm$0.17& 9334*$\pm$1.39& 7665$\pm$566& 27.20$\pm$2.61& 9353$\pm$2896& 98.90$\pm$5.88\\

BipedalWalker& \textbf{309.8$\dagger$}$\pm$15.6& -52.77*$\pm$1.94& 288.4$\dagger\pm$51.25& -123.90$\pm$11.17& \textbf{307.2}$\pm$57.92& 266.9$\pm$28.52\\

BipedalWalkerHardcore& \textbf{-10.76$\dagger$}$\pm$27.70& -98.52$\pm$3.21& -57.97$\pm$21.08& -50.05*$\pm$10.27& -127.4$\pm$45.2& -105.3$\pm$22.2\\

LunarLanderContinuous& \textbf{290.0$\dagger$}$\pm$50.9& 85.67*$\pm$23.42& \textbf{289.7$\dagger$}$\pm$54.1& -65.89$\pm$96.48& 283.3$\pm$69.29& 59.32$\pm$68.44\\

\bottomrule
\end{tabular}
}
\end{table*}

\subsection{Comparison with the State of the Art}
\label{Benchmark tests}

Though well-suited for illustration, CartPole alone is not challenging and generalized enough to fully manifest ADAC's competitiveness. In this subsection, we present that ADAC can achieve state-of-the-art performance in standard benchmarks. 

\textbf{Setup} $\;$ To demonstrate the generality of our method, we construct a 14-task testbed suite composed of qualitatively diverse continuous-control environments from the OpenAI Gym toolkit \cite{openaigym}.
On top of the two baselines adopted earlier (i.e. DDPG and SAC), we further include TD3 \cite{fujimoto2018addressing}, which improves upon DDPG by addressing some of its function approximation errors, PPO \cite{schulman2017proximal}, which is regarded as one of the most stable and efficient on-policy policy gradient algorithm, and GEP-PG \cite{colas2018gep}, which combines Goal Exploration Process \cite{pere2018unsupervised} with policy gradient to perform curious exploration as well as stable learning. Though not exhaustive, this baseline suite still embodies many of the latest advancements and can be indeed deemed as the existing state-of-the-art. However, we compare with GEP-PG only in tasks adopted in their original experiments. Specifically, since the GEP part of the algorithm needs hand-crafted exploration goals, we are not able to run their model on new experiments since it is nontrivial to generalize their experiments in other tasks. 
To best reproduce the rest's performance, we use their original open-source implementations if released; otherwise, we build our own versions after the most-starred third-party implementations in GitHub.
Furthermore, to prevent over-claiming the state-of-the-art, we fine-tune their hyper-parameters around the values reported in the respective literature, but only coarsely tune the hyper-parameters introduced by ADAC.\footnotemark[7]
All experiments are run for 1 million time-steps, or until reaching performance convergence, whichever happens earlier.

\textbf{Empirical Insights} $\;$ Table~\ref{table:benchmark} corroborates that ADAC's competitiveness over existing methods stem from its disentangled nature. More importantly, these results reveal two desirable properties of ADAC's full compatibility with existing off-policy methods. First, ADAC consistently outperforms the method it is based on. As indicated by the $*$ symbols, compared to its base model, DDPG-based ADAC achieves statistically better or comparable performance on more than $93\% (13/14)$  of the benchmarks and obtains identical performance on one of the remaining two. Though not as remarkable as DDPG-based ADAC, TD-based ADAC also manages to achieve statically better or comparable performance over its base model on more than $78\% (11/14)$ of the tasks; see the $\dagger$ symbols.  Second, ADAC retains the benefits of improvements developed by the base model themselves. This is best illustrated by TD3-based ADAC's performance superiority over DDPG-based ADAC. 

\footnotetext[7]{For a fair comparison, we do not use intrinsic reward throughout this section, since most baseline approaches are not able to naturally incorporate them during learning. See Appendix~\ref{Continuous Control Benchmarks} for additional environment details, and Appendix~\ref{Hyper-parameters and Network Structure} for hyperparameters of ADAC and baselines. Full banchmark results are given in Appendix~\ref{Full Benchmark Result}.}

We would like to specially call readers' attention to our comparison of ADAC with SAC since they both use energy-based behavior policy. This comparison also reveals the benefit brought by the disentangled structure and the analogous actors and critics. ADAC (TD3) achieves better average performance over SAC on 71\%(10/14) of the benchmarks, indicating the effectiveness of our proposed analogous disentangled structure. 

Despite that GDP-PG \cite{colas2018gep} also uses the disentanglement idea, we do not compare ADAC with it across the whole 14-benchmark test suite and hence GEP-PG is not included in Table 2. This decision is made largely due to the fact that the Goal Exploration Process (GEP) in GEP-PG requires manually defining a goal space to explore, which is task dependent and can critically influence the algorithm performance. Given this, we can only compare with it on the two experiments that GEP-PG has run, of which only one overlaps with our task suit, namely HalfCheetah. In HalfCheetah, GEP-PG achieves 6118 cumulative reward, while ADAC (TD3) achieves 9392, showing superiority over GEP-PG. Furthermore as also acknowledged in its paper, GEP-PG lags behind SAC in performance, which suggests that naively disentangling the behavior policy from the target policy does not guarantee competitive performance. Rather, to design effective disentangled actor-critic, we should also pay attention to how to best restrict some components.

When considering all reported methods together, TD3-based ADAC obtains the most number of the state-of-the-art results; as indicated in bold, it is the best performer (or statistically comparable with the best) on more than $71\% (10/14)$ of the benchmarks. 

\subsection{Evaluation in Sparse-Reward Environments}
\label{Evaluation in Sparse-Reward Environments}

\begin{figure}[t]
    \centering
    \includegraphics[width=\columnwidth]{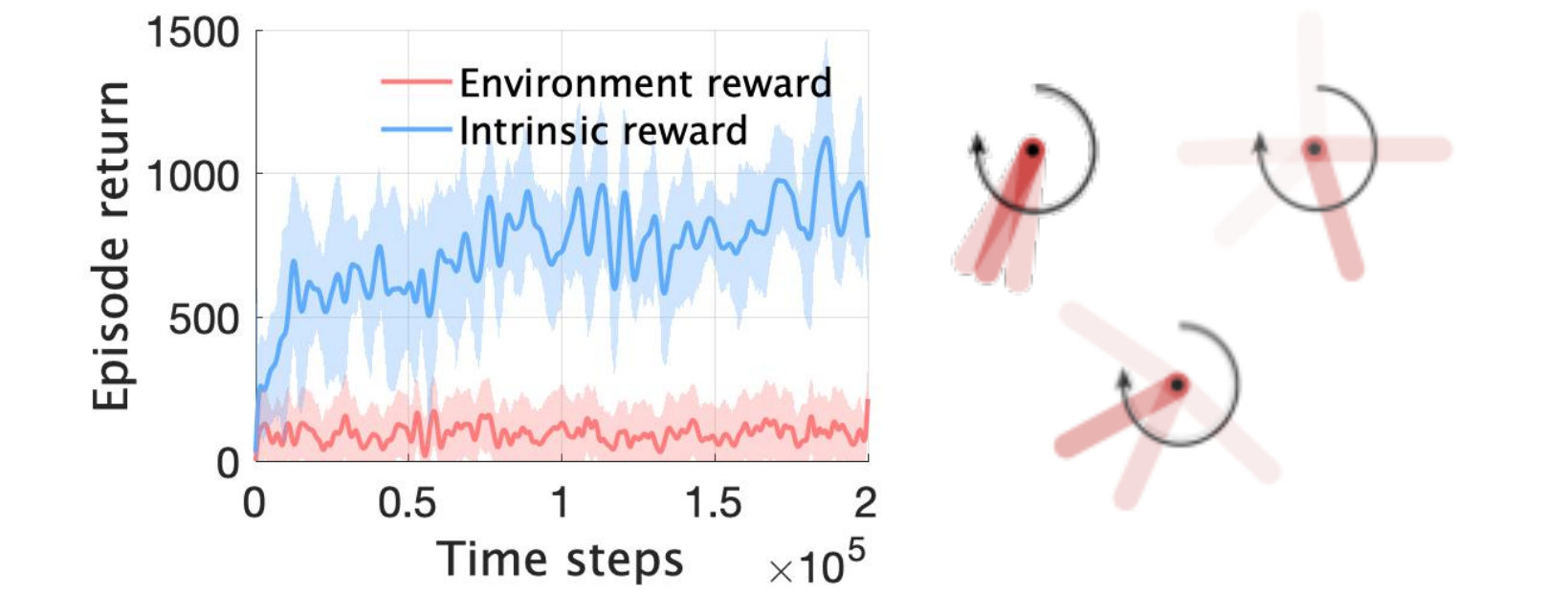}
    \caption{Illustration of how intrinsic reward contaminates the environment-defined optimality in PendulumSparse. Fooled into collecting more intrinsic rewards rather than environment rewards (see the learning curves on the left), the agent constantly alternates between spinning the pendulum and barely moving it (see the snapshots of the target policy on the right), making no real progress.}
    \label{fig:bad_pendulum}
\end{figure}

\begin{figure}[t]
    \centering
    \includegraphics[width=\columnwidth]{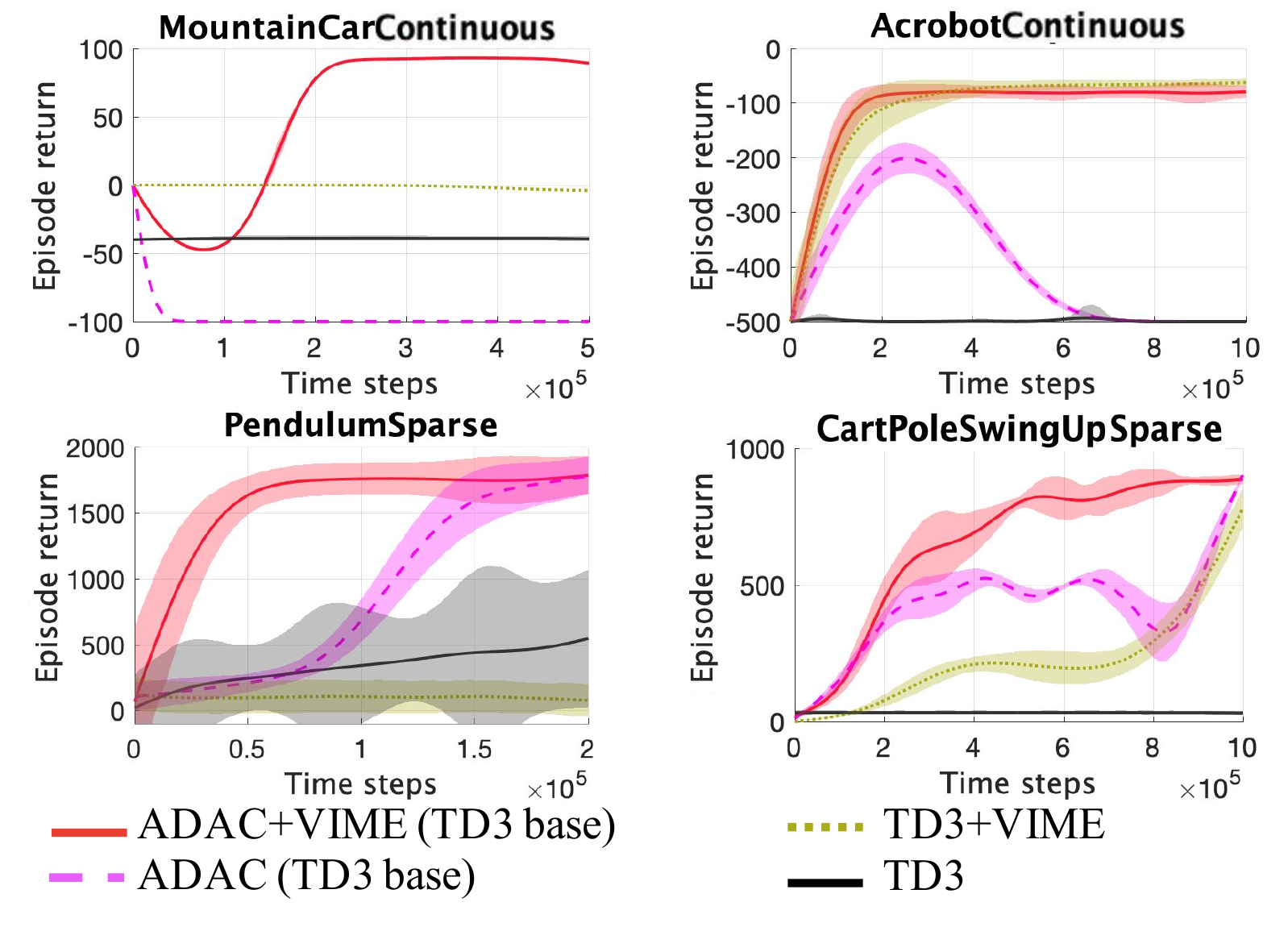}
    \caption{Learning curves for four sparse-reward tasks. Lines denote the average over 20 trials and the shaded areas represent the range of one standard deviation.}
    \label{fig:four_sparse_tasks}
\end{figure}

Encouraged by the promising results observed on the benchmarks, in this subsection we evaluate ADAC under more challenging environments, in which rewards are barely provided. This set of experiments aim to test ADAC's exploration capacity under extreme settings. Furthermore, we also see them fit as demonstration beds to present ADAC's natural compatibility with intrinsic methods. In this regard, we are particularly interested in investigating whether the disentangled nature of ADAC helps mitigate intrinsic rewards' undesirable bias effect on the environment-defined optimality.

\textbf{Setup} $\;$ To the surprise of many, sparse-reward environments turn out to be relatively unpopular in commonly-used RL toolkits. Besides including the classic MountainCarContinuous and Acrobot (after converting its action space to be countinuous), to construct a decently sized testing suite, we further hand-craft new tasks, namely PendulumSparse and CartPoleSwingUpSparse by sparsifying the rewards in the existing environments. It is achieved mainly through suppressing the original rewards until reaching some predefined threshold.\footnotemark[8] 
Due to their dependency on environment-provided rewards as feedback signals, most model-free RL algorithms suffer significant performance degradation in these sparse-reward tasks. In this situation, resorting to intrinsic methods (IM) for additional signals has been widely considered as the go-to solution. Among a wide variety of IM methods, we adopt Variational Information Maximization Exploration (VIME) \cite{houthooft2016vime} as our internal reward generator for its consistent good performance on a wide variety of exploration-challenging tasks. Considering TD3-based ADAC's superiority over DDPG-based ADAC, we only combine VIME into TD3 and TD3-based ADAC. Note when paired with ADAC, intrinsic rewards are only visible to the behavior policy. 

\footnotetext[8]{More details about the sparse-reward environments can be founded in Appendix~\ref{sparse-reward environments}.}

\textbf{Empirical Insights} $\;$ Among the four environments, PendulumSparse has the most vulnerable environment-defined optimality. The goal here is to swing the inverted pendulum up so it stays upright. As suggested by Figure~\ref{fig:bad_pendulum}, not knowing how to distinguish between intrinsic and environment rewards, VIME-augmented TD3 is completely fooled into chasing after the intrinsic rewards. In other words, the VIME-augmented TD3's understanding of what is optimal is completely off from the true environment-defined optimality. Note as demonstrated in left-bottom part of Figure~\ref{fig:four_sparse_tasks}, VIME-augmented TD3's performance even trails behind TD3's, which is an indisputable evidence that the bias introduced by IM can be detrimental and should be addressed whenever possible.
In contrast, thanks to its disentangled nature, VIME-augmented ADAC only perceives intrinsic rewards in its behavior policy, which means its target policy always remains optimal with regards to our current knowledge about environment rewards. Because of this, VIME-augmented manages to consistently solve this exploration-challenging task. ADAC's natural compatibility with VIME is further corroborated by the results in the remaining 3 tasks. As suggested by the the complete Figure~\ref{fig:four_sparse_tasks}, VIME-augmented ADAC consistently surpasses all reported alternatives by a large margin in terms of both convergence speed and final performance.

\section{Conclusion}
\label{conclusion}

We present
Analogous Disentangled Actor-Critic (ADAC), an off-policy reinforcement learning framework that explicitly disentangles the behavior and target policy. 
Compared to prior work, to stabilize model updates, we restrain our behavior policy and its corresponding critic to be analogous of their target counterparts. Thanks to its disentangled and analogous nature, environment-reward-only ADAC achieves the state-of-the-art results in 10 out of 14 continuous control benchmarks. Moreover, ADAC is naturally compatible with intrinsic rewards, outperforming alternatives in exploration-challenging tasks.

\vspace{1em}
\noindent \textbf{Acknowledgements} $\;$ This work is partially supported by NSF grants \#IIS-1943641, \#IIS-1633857, \#CCF-1837129, DARPA XAI grant \#N66001-17-2-4032, UCLA Samueli Fellowship, and gifts from Intel and Facebook Research.


\clearpage

\bibliographystyle{ACM-Reference-Format}  
\bibliography{references}  


\clearpage

\onecolumn

\appendix

\section*{}
\begin{center}
    {\huge \bf Supplementary Material}
\end{center}
\vspace{4em}

\newcommand\numberthis{\addtocounter{equation}{1}\tag{\theequation}}

\section{Theoretical Results}
\label{Theoretical Results}

This section provides the full proof of Theorem~\ref{theorem:critic_theorem} that is the guarantee of the training stability as well as that of the training effectiveness of the \emph{critic bounding} approach (Section~\ref{Critic Bounding}).

\vspace{0.5em}
\noindent \emph{Proof of Theorem~\ref{theorem:critic_theorem}}
\vspace{0.2em}

We define $Q^{\pi}_{*}$ as the optimal value function with respect to policy $\pi$ and reward $\rewards$, i.e., $Q^{\pi}_{*} = \mathcal{T}^{\pi}_{\rewards} Q^{\pi}_{*}$. We further define $Q^{\mu}_{\rewards'}$ as the optimal value function with respect to $\mu$ and $\rewards'$ (i.e., $Q^{\mu}_{\rewards'} = \mathcal{T}^{\mu}_{\rewards'} Q^{\mu}_{\rewards'}$). Our proof is built upon the foundation result stated in the following lemma. For the sake of a smoother presentation, we defer its proof after we finish proving the theorem. 

\vspace{0.5em}
\begin{lemma}
\label{lemma: foundations used in proving the theorem}
Under the definitions and assumptions made in Theorem~\ref{theorem:critic_theorem} and the above paragraph, we have the following result
    \begin{align*}
        Q^{\mu}_{\rewards'} - Q^{\pi}_{*} & = \left [ (\mathcal{I} - \gamma \mathcal{P}^{\mu})^{-1} - (\mathcal{I} - \gamma \mathcal{P}^{\pi})^{-1} \right ] (\mathcal{T}^{max}_{\rewards} Q^{\pi}_{\rewards} - Q^{\pi}_{\rewards}) - (\mathcal{I} - \gamma \mathcal{P}^{\mu})^{-1} (\mathcal{T}^{max}_{\rewards} Q^{\pi}_{\rewards} - \mathcal{T}^{\mu}_{\rewards'} Q^{\pi}_{\rewards}). \numberthis \label{eq:proof_note4}
    \end{align*}
\end{lemma}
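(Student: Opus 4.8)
The plan is to work entirely in operator (matrix) notation, treating $\mathcal{P}^{\pi}$ and $\mathcal{P}^{\mu}$ as the state-action transition operators that average a value function over the next state (drawn from the environment dynamics $\transitions$) and the next action (drawn from $\pi$ or $\mu$, respectively), so that every Bellman operator becomes affine: $\mathcal{T}^{\pi}_{\rewards} Q = \rewards + \gamma \mathcal{P}^{\pi} Q$ and likewise $\mathcal{T}^{\mu}_{\rewards'} Q = \rewards' + \gamma \mathcal{P}^{\mu} Q$. First I would record the closed-form solutions of the two defining fixed-point equations. Since $\gamma \in [0,1)$ and each $\mathcal{P}$ is a stochastic operator, $\mathcal{I} - \gamma \mathcal{P}^{\pi}$ and $\mathcal{I} - \gamma \mathcal{P}^{\mu}$ are invertible (Neumann series), so $Q^{\pi}_{*} = (\mathcal{I} - \gamma \mathcal{P}^{\pi})^{-1} \rewards$ follows from $Q^{\pi}_{*} = \mathcal{T}^{\pi}_{\rewards} Q^{\pi}_{*}$, and $Q^{\mu}_{\rewards'} = (\mathcal{I} - \gamma \mathcal{P}^{\mu})^{-1} \rewards'$ follows from $Q^{\mu}_{\rewards'} = \mathcal{T}^{\mu}_{\rewards'} Q^{\mu}_{\rewards'}$. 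These already express the left-hand side $Q^{\mu}_{\rewards'} - Q^{\pi}_{*}$ purely through the two resolvents and the two rewards.

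The key algebraic step is to rewrite the two residual terms on the right-hand side. Because $\pi$ is greedy with respect to $Q^{\pi}_{\rewards}$, the optimality operator coincides with the policy operator on $Q^{\pi}_{\rewards}$, i.e. $\mathcal{T}^{\mathrm{max}}_{\rewards} Q^{\pi}_{\rewards} = \mathcal{T}^{\pi}_{\rewards} Q^{\pi}_{\rewards} = \rewards + \gamma \mathcal{P}^{\pi} Q^{\pi}_{\rewards}$; this is the only place the greediness of $\pi$ enters. Setting $\Delta := \mathcal{T}^{\mathrm{max}}_{\rewards} Q^{\pi}_{\rewards} - Q^{\pi}_{\rewards} = \rewards - (\mathcal{I} - \gamma \mathcal{P}^{\pi}) Q^{\pi}_{\rewards}$ and $\Xi := \mathcal{T}^{\mathrm{max}}_{\rewards} Q^{\pi}_{\rewards} - \mathcal{T}^{\mu}_{\rewards'} Q^{\pi}_{\rewards}$, and expanding $\mathcal{T}^{\mu}_{\rewards'} Q^{\pi}_{\rewards} = \rewards' + \gamma \mathcal{P}^{\mu} Q^{\pi}_{\rewards}$, I would establish the two substitution identities
\[
(\mathcal{I} - \gamma \mathcal{P}^{\pi})^{-1} \Delta = Q^{\pi}_{*} - Q^{\pi}_{\rewards}, \qquad (\mathcal{I} - \gamma \mathcal{P}^{\mu})^{-1} \bigl( \Delta - \Xi \bigr) = Q^{\mu}_{\rewards'} - Q^{\pi}_{\rewards}.
\]
The first is immediate from applying the resolvent to $\Delta = \rewards - (\mathcal{I} - \gamma \mathcal{P}^{\pi}) Q^{\pi}_{\rewards}$. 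For the second, the $\gamma \mathcal{P}^{\pi} Q^{\pi}_{\rewards}$ contributions in $\Delta$ and $\Xi$ cancel, collapsing the argument to $\Delta - \Xi = \rewards' - (\mathcal{I} - \gamma \mathcal{P}^{\mu}) Q^{\pi}_{\rewards}$, whence the resolvent yields $Q^{\mu}_{\rewards'} - Q^{\pi}_{\rewards}$.

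Finally I would assemble the claimed identity. Grouping the right-hand side of the lemma as $(\mathcal{I} - \gamma \mathcal{P}^{\mu})^{-1}(\Delta - \Xi) - (\mathcal{I} - \gamma \mathcal{P}^{\pi})^{-1}\Delta$, the two substitution identities turn it into $(Q^{\mu}_{\rewards'} - Q^{\pi}_{\rewards}) - (Q^{\pi}_{*} - Q^{\pi}_{\rewards})$, and the $Q^{\pi}_{\rewards}$ terms cancel to leave exactly $Q^{\mu}_{\rewards'} - Q^{\pi}_{*}$. Note the computation uses neither $\rewards' \geq \rewards$, nor the optimality of $Q^{\pi}_{\rewards'}$, nor the greediness of $\mu$; those assumptions are reserved for converting this exact identity into the sign-definite bounds \eqref{eq: first theoretical result} and \eqref{eq: second theoretical result} of the theorem. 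The main obstacle is therefore purely bookkeeping: ensuring the cancellation of the $\gamma \mathcal{P}^{\pi} Q^{\pi}_{\rewards}$ terms in the second identity, and being careful that the affine rewriting $\mathcal{T}^{\mathrm{max}}_{\rewards} Q^{\pi}_{\rewards} = \rewards + \gamma \mathcal{P}^{\pi} Q^{\pi}_{\rewards}$ is legitimate only at the specific argument $Q^{\pi}_{\rewards}$ where $\pi$ attains the maximum, and must not be mistaken for an operator identity valid on arbitrary inputs.
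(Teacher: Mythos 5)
Your proof is correct, and it reaches the identity by a somewhat different (and more economical) route than the paper. The paper works \emph{forward}: it starts from a telescoping decomposition of $Q^{\mu}_{\rewards'} - Q^{\pi}_{*}$ into Bellman-operator differences, reduces it to $(\mathcal{I} - \gamma \mathcal{P}^{\mu})(Q^{\mu}_{\rewards'} - Q^{\pi}_{*}) = (\gamma \mathcal{P}^{\mu} - \gamma \mathcal{P}^{\pi})(Q^{\pi}_{*} - Q^{\pi}_{\rewards}) - (\mathcal{T}^{max}_{\rewards} Q^{\pi}_{\rewards} - \mathcal{T}^{\mu}_{\rewards'} Q^{\pi}_{\rewards})$, separately establishes the Munos-style identity $(\mathcal{I} - \gamma \mathcal{P}^{\pi})(Q^{\pi}_{*} - Q^{\pi}_{\rewards}) = \mathcal{T}^{max}_{\rewards} Q^{\pi}_{\rewards} - Q^{\pi}_{\rewards}$, and finally needs the resolvent-difference identity $(\mathcal{I} - \gamma \mathcal{P}^{\mu})^{-1}(\gamma \mathcal{P}^{\mu} - \gamma \mathcal{P}^{\pi})(\mathcal{I} - \gamma \mathcal{P}^{\pi})^{-1} = (\mathcal{I} - \gamma \mathcal{P}^{\mu})^{-1} - (\mathcal{I} - \gamma \mathcal{P}^{\pi})^{-1}$ to assemble the stated form. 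You instead verify the identity \emph{backward}: regrouping the right-hand side as $(\mathcal{I} - \gamma \mathcal{P}^{\mu})^{-1}(\Delta - \Xi) - (\mathcal{I} - \gamma \mathcal{P}^{\pi})^{-1}\Delta$ and evaluating each resolvent application in closed form via $Q^{\pi}_{*} = (\mathcal{I} - \gamma \mathcal{P}^{\pi})^{-1}\rewards$ and $Q^{\mu}_{\rewards'} = (\mathcal{I} - \gamma \mathcal{P}^{\mu})^{-1}\rewards'$. The ingredients are identical (greediness of $\pi$ used only at the argument $Q^{\pi}_{\rewards}$, the two fixed-point closed forms, Neumann invertibility), and your first substitution identity is exactly the paper's Munos-style step in disguise; but your packaging avoids the forward telescoping and the explicit resolvent-difference manipulation, making the verification shorter. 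Your closing remark that none of $\rewards' \geq \rewards$, the optimality of the behavior critic, or the greediness of $\mu$ is needed for the lemma itself is accurate and matches the paper, where those assumptions enter only in the passage from the lemma to the two inequalities of the theorem. One minor point in your favor: your closed form $Q^{\mu}_{\rewards'} = (\mathcal{I} - \gamma \mathcal{P}^{\mu})^{-1}\rewards'$ is the one actually consistent with the lemma's definition $Q^{\mu}_{\rewards'} = \mathcal{T}^{\mu}_{\rewards'} Q^{\mu}_{\rewards'}$, whereas the paper's surrounding text writes $(\mathcal{I} - \gamma \mathcal{P}^{\pi})^{-1}\rewards'$ at one point, which appears to be a typo.
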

\vspace{0.5em}

Recall that $Q^{\mu}_{\rewards'}$ and $Q^{\pi}_{*}$ are the optimal value function with respect to $\langle \mu, \rewards' \rangle$ and $\langle \pi, \rewards \rangle$, respectively. By definition, we have $Q^{\mu}_{\rewards'} = (\mathcal{I} - \gamma \mathcal{P}^{\pi})^{-1} \rewards'$ and $Q^{\pi}_{*} = (\mathcal{I} - \gamma \mathcal{P}^{\pi})^{-1} \rewards$ (since $(\mathcal{I} - \gamma \mathcal{P}^{\pi})^{-1} \rewards = \sum_{t = 0}^{\infty} \gamma^{t} \mathcal{P}^{t} \rewards = Q^{\pi}_{*}$).

\textbf{Result on training effectiveness} (i.e. Eq~\eqref{eq: second theoretical result}) $\;$ We are now ready to prove the second result stated in the theorem. Since $\mathcal{T}^{\mu}_{\rewards'} Q^{\pi}_{\rewards} \geq \mathcal{T}^{max}_{\rewards} Q^{\pi}_{\rewards}$, we have $(\mathcal{T}^{max}_{\rewards} Q^{\pi}_{\rewards} - \mathcal{T}^{\mu}_{\rewards'} Q^{\pi}_{\rewards}) \leq 0$. Plug in Eq~\eqref{eq:proof_note4} and use the equality 
    \begin{align*}
        Q^{\mu}_{\rewards'} - Q^{\pi}_{*} = (\mathcal{I} - \gamma \mathcal{P}^{\pi})^{-1} (\rewards' - \rewards),
    \end{align*}
\noindent we have
    \begin{align*}
        (\mathcal{I} - \gamma \mathcal{P}^{\mu})^{-1} & (\mathcal{T}^{max}_{\rewards} Q^{\pi}_{\rewards} - Q^{\pi}_{\rewards}) \geq (\mathcal{I} - \gamma \mathcal{P}^{\pi})^{-1} (\mathcal{T}^{max}_{\rewards} Q^{\pi}_{\rewards} - Q^{\pi}_{\rewards}) + (\mathcal{I} - \gamma \mathcal{P}^{\pi})^{-1} (\rewards - \rewards'),
    \end{align*}
\noindent which is equivalent to the second result stated in the theorem (Eq~\eqref{eq: second theoretical result}).

\vspace{0.5em}
\textbf{Result on training stability} (i.e. Eq~\eqref{eq: first theoretical result}) $\;$
To prove the first result stated in the theorem, we start from rearranging Eq~\eqref{eq:proof_note4}:
    \begin{align*}
        \big [ (\mathcal{I} - \gamma \mathcal{P}^{\pi})^{-1} - (\mathcal{I} - \gamma \mathcal{P}^{\mu})^{-1} \big ] (\mathcal{T}^{max}_{\rewards} Q^{\pi}_{\rewards} - Q^{\pi}_{\rewards}) & = - (\mathcal{I} - \gamma \mathcal{P}^{\mu})^{-1} ((\mathcal{I} - \gamma \mathcal{P}^{\mu}) (Q^{\mu}_{\rewards'} - Q^{\pi}_{*}) + \mathcal{T}^{max}_{\rewards} Q^{\pi}_{\rewards} - \mathcal{T}^{\mu}_{\rewards'} Q^{\pi}_{\rewards}) \\
        & = - (\mathcal{I} - \gamma \mathcal{P}^{\mu})^{-1} (\rewards' + \gamma \mathcal{P}^{\mu} Q^{\pi}_{*} - Q^{\pi}_{*} + \mathcal{T}^{max}_{\rewards} Q^{\pi}_{\rewards} - \mathcal{T}^{\mu}_{\rewards'} Q^{\pi}_{\rewards}) \\ 
        & = - (\mathcal{I} - \gamma \mathcal{P}^{\mu})^{-1} (\rewards + \gamma \mathcal{P}^{\mu} Q^{\pi}_{*} - Q^{\pi}_{*} + \gamma \mathcal{P}^{\pi} Q^{\pi}_{\rewards} - \gamma \mathcal{P}^{\mu} Q^{\pi}_{\rewards}) \\
        & \overset{(a)}{\geq} - (\mathcal{I} - \gamma \mathcal{P}^{\mu})^{-1} (\gamma \mathcal{P}^{\pi} Q^{\pi}_{\rewards} - \gamma \mathcal{P}^{\mu} Q^{\pi}_{\rewards}) \\
        & \overset{(b)}{\geq} - \gamma (\mathcal{I} - \gamma \mathcal{P}^{\mu})^{-1} (\mathcal{P}^{\pi} - \mathcal{P}^{\mu}) (\mathcal{I} - \gamma \mathcal{P}^{\pi})^{-1} \rewards \\
        & = \big [ (\mathcal{I} - \gamma \mathcal{P}^{\pi})^{-1} - (\mathcal{I} - \gamma \mathcal{P}^{\mu})^{-1} \big ] \rewards, \numberthis \label{eq:proof_note7}
    \end{align*}
\noindent where $(a)$ uses the inequality $\mathcal{P}^{\mu} Q^{\pi}_{*} \leq \mathcal{P}^{\pi} Q^{\pi}_{*}$, and $(b)$ follows from $Q^{\pi}_{\rewards} \leq Q^{\pi}_{*} = (\mathcal{I} - \gamma \mathcal{P}^{\pi})^{-1} \rewards$. Rewriting Eq~\eqref{eq:proof_note7} gives us the first result stated in the theorem (Eq~\eqref{eq: first theoretical result}):
    \begin{align*}
        (\mathcal{I} - & \gamma \mathcal{P}^{\pi})^{-1} (\mathcal{T}^{max}_{\rewards} Q^{\pi}_{\rewards} - Q^{\pi}_{\rewards}) \geq (\mathcal{I} - \gamma \mathcal{P}^{\mu})^{-1} (\mathcal{T}^{max}_{\rewards} Q^{\pi}_{\rewards} - Q^{\pi}_{\rewards}) + \big [ (\mathcal{I} - \gamma \mathcal{P}^{\pi})^{-1} - (\mathcal{I} - \gamma \mathcal{P}^{\mu})^{-1} \big ] \rewards. 
    \end{align*}
$\hfill\square$

\vspace{1.0em}
\begin{proof}[Proof of Lemma~\ref{lemma: foundations used in proving the theorem}]
Before delving into the detailed derivation, we make the following clarifications. First, although $\pi$ is a greedy policy w.r.t. $Q^{\pi}_{\rewards}$, $Q^{\pi}_{\rewards}$ is not the optimal value function w.r.t. $\pi$ and $\rewards$. In other words, $\mathcal{T}^{max}_{\rewards} Q^{\pi}_{\rewards} = \mathcal{T}^{\pi}_{\rewards} Q^{\pi}_{\rewards}$ is guaranteed to hold yet we might have $Q^{\pi}_{\rewards} \neq \mathcal{T}^{\pi}_{\rewards} Q^{\pi}_{\rewards}$. Second, in both the theorem and the proof, we omit the state-action notation (e.g., $Q(s,a)$) for the sake of simplicity.

We begin from the difference between the respective optimal value function with regard to $\mathcal{T}^{\pi}_{\rewards'}$ and $\mathcal{T}^{\pi}_{\rewards}$:

\begin{align*}
    Q^{\mu}_{\rewards'} - Q^{\pi}_{*} & \overset{(a)}{=} \mathcal{T}^{\mu}_{\rewards'} Q^{\mu}_{\rewards'} - \mathcal{T}^{\mu}_{\rewards'} Q^{\pi}_{\rewards} + \mathcal{T}^{max}_{\rewards} Q^{\pi}_{\rewards} - \mathcal{T}^{\pi}_{\rewards} Q^{\pi}_{*} - (\mathcal{T}^{\mu}_{\rewards'} Q^{\mu}_{\rewards'} - Q^{\mu}_{\rewards'} + \mathcal{T}^{max}_{\rewards} Q^{\pi}_{\rewards} - \mathcal{T}^{\mu}_{\rewards'} Q^{\pi}_{\rewards}) \\
    & \overset{(b)}{=} \gamma \mathcal{P}^{\mu} (Q^{\mu}_{\rewards'} - Q^{\pi}_{*} + Q^{\pi}_{*} - Q^{\pi}_{\rewards}) + \gamma \mathcal{P}^{\pi} (Q^{\pi}_{\rewards} - Q^{\pi}_{*}) - (\mathcal{T}^{max}_{\rewards} Q^{\pi}_{\rewards} - \mathcal{T}^{\mu}_{\rewards'} Q^{\pi}_{\rewards}), \numberthis \label{eq:proof_result_1}
\end{align*}

\noindent where $\mathcal{P}^{\pi}$ is the state probability transition operator with respect to the environment dynamics and policy $\pi$; $(a)$ uses the equality $Q^{\pi}_{*} = \mathcal{T}^{\pi}_{\rewards} Q^{\pi}_{*}$; $(b)$ adopts the fact that $\mathcal{T}^{max}_{\rewards} Q^{\pi}_{\rewards} = \mathcal{T}^{\pi}_{\rewards} Q^{\pi}_{\rewards}$. Combining the terms $Q^{\mu}_{\rewards'} - Q^{\pi}_{*}$ and $Q^{\pi}_{*} - Q^{\pi}_{\rewards}$ gives us
    \begin{align*}
        (\mathcal{I} - \gamma \mathcal{P}^{\mu}) & (Q^{\mu}_{\rewards'} - Q^{\pi}_{*}) = (\gamma \mathcal{P}^{\mu} - \gamma \mathcal{P}^{\pi}) (Q^{\pi}_{*} - Q^{\pi}_{\rewards}) - (\mathcal{T}^{max}_{\rewards} Q^{\pi}_{\rewards} - \mathcal{T}^{\mu}_{\rewards'} Q^{\pi}_{\rewards}), \numberthis \label{eq:proof_note3}
    \end{align*}
\noindent where $\mathcal{I}$ is the identity operator, i.e. $\mathcal{I} Q = Q$. We define $(\mathcal{I} - \gamma \mathcal{P})^{-1} \overset{def}{=} \mathcal{I} + \sum_{t=1}^{\infty} \gamma^{t} \mathcal{P}^t$. By definition, given the initial state-action distribution $\beta$, $(\mathcal{I} - \gamma \mathcal{P}^{\pi})^{-1} \beta$ is the state-action marginal distribution with respect to $\beta$ and policy $\pi$. We can easily verify that $(\mathcal{I} - \gamma \mathcal{P})^{-1} (\mathcal{I} - \gamma \mathcal{P}) = \mathcal{I}$ and $(\mathcal{I} - \gamma \mathcal{P}) (\mathcal{I} - \gamma \mathcal{P})^{-1} = \mathcal{I}$ since by definition, $\gamma < 1$.

Next, we derive the connection between $Q^{\pi}_{*} - Q^{\pi}_{\rewards}$ and $\mathcal{T}^{\rewards} Q^{\pi}_{\rewards} - Q^{\pi}_{\rewards}$, which is closely related to the result given by Munos et al. \cite{munos2007performance}:
    \begin{align*}
        (\mathcal{I} - \gamma \mathcal{P}^{\pi}) (Q^{\pi}_{*} - Q^{\pi}_{\rewards}) & = Q^{\pi}_{*} - Q^{\pi}_{\rewards} - \gamma \mathcal{P}^{\pi} Q^{\pi}_{*} + \gamma \mathcal{P}^{\pi} Q^{\pi}_{\rewards} \\
        & = \rewards + \gamma \mathcal{P}^{\pi} Q^{\pi}_{\rewards} - (\rewards + \gamma \mathcal{P}^{\pi} Q^{\pi}_{*}) + Q^{\pi}_{*} - Q^{\pi}_{\rewards} \\
        & = \mathcal{T}^{\pi}_{\rewards} Q^{\pi}_{\rewards} - \mathcal{T}^{\pi}_{\rewards} Q^{\pi}_{*} + Q^{\pi}_{*} - Q^{\pi}_{\rewards} \\
        & = \mathcal{T}^{max}_{\rewards} Q^{\pi}_{\rewards} - Q^{\pi}_{\rewards},
    \end{align*}
\noindent where the result $\mathcal{T}^{\pi}_{\rewards} Q^{\pi}_{*} = Q^{\pi}_{*}$ and $\mathcal{T}^{\pi}_{\rewards} Q^{\pi}_{\rewards} = \mathcal{T}^{max}_{\rewards} Q^{\pi}_{\rewards}$ are used. Plug in Eq~\eqref{eq:proof_note3}, we have
    \begin{align*}
        (\mathcal{I} - \gamma \mathcal{P}^{\mu}) (Q^{\mu}_{\rewards'} - Q^{\pi}_{*}) = (\gamma \mathcal{P}^{\mu} - \gamma \mathcal{P}^{\pi}) (\mathcal{I} - \gamma \mathcal{P}^{\pi})^{-1} (\mathcal{T}^{max}_{\rewards} Q^{\pi}_{\rewards} - Q^{\pi}_{\rewards}) - (\mathcal{T}^{max}_{\rewards} Q^{\pi}_{\rewards} - \mathcal{T}^{\mu}_{\rewards'} Q^{\pi}_{\rewards}).
    \end{align*}
Combining the above equation with Eq~\eqref{eq:proof_result_1}, we get
    \begin{align*}
        Q^{\mu}_{\rewards'} - Q^{\pi}_{*} & = (\mathcal{I} - \gamma \mathcal{P}^{\mu})^{-1}  (\gamma \mathcal{P}^{\mu} - \gamma \mathcal{P}^{\pi}) (\mathcal{I} - \gamma \mathcal{P}^{\pi})^{-1} (\mathcal{T}^{max}_{\rewards} Q^{\pi}_{\rewards} - Q^{\pi}_{\rewards}) - (\mathcal{I} - \gamma \mathcal{P}^{\mu})^{-1} (\mathcal{T}^{max}_{\rewards} Q^{\pi}_{\rewards} - \mathcal{T}^{\mu}_{\rewards'} Q^{\pi}_{\rewards}) \\
        & = \left [ (\mathcal{I} - \gamma \mathcal{P}^{\mu})^{-1} - (\mathcal{I} - \gamma \mathcal{P}^{\pi})^{-1} \right ] (\mathcal{T}^{max}_{\rewards} Q^{\pi}_{\rewards} - Q^{\pi}_{\rewards}) - (\mathcal{I} - \gamma \mathcal{P}^{\mu})^{-1} (\mathcal{T}^{max}_{\rewards} Q^{\pi}_{\rewards} - \mathcal{T}^{\mu}_{\rewards'} Q^{\pi}_{\rewards}).
    \end{align*}
\end{proof}

\section{Algorithmic Details of ADAC}
\label{Algorithmic Details of ADAC}

\begin{algorithm}[t]
\caption{Adventurous Actor-Critic (ADAC) with DDPG as the base model}
\label{alg:ADAC_DDPG}
{\fontsize{9}{9} \selectfont
\begin{algorithmic}[1]

\STATE{\textbf{input:} environment $\mathcal{E}$, batch size $M$, maximum episode length $T_{\mathrm{max}}$, dimension of the action space $d$, mini-batch size $K$, $n_{\xi}$, and $\tau$.} 

\STATE{\textbf{initialize:} networks $Q^{\mathrm{tar}}_{\phi}$, $Q^{\mathrm{beh}}_{\psi}$, and $f_{\varphi}$; target networks $Q^{\mathrm{tar}}_{\phi^{\prime}}$, $Q^{\mathrm{beh}}_{\psi^{\prime}}$, and $f_{\varphi^{\prime}}$; replay buffer $\mathbb{B}$. $Q^{\mathrm{tar}}_{\phi}$ and $Q^{\mathrm{beh}}_{\phi}$ correspond to $Q^{\pi}_{\rewards}$ and $Q^{\mu}_{\rewards'}$ in the main text, respectively. $\phi^{\prime} := \phi$; $\psi^{\prime} := \psi$; $\varphi^{\prime} := \varphi$.}

\STATE{Define the deterministic target policy $\pi$ and the stochastic behavior policy $\mu$ with $f_{\varphi}$:
    \vspace{-0.3em}
    \begin{gather}
        \pi (s) := f_{\varphi} (s, \xi) \mid_{\xi = [0,0,\dots,0]^{T}}, \label{eq: ADAC (DDPG) pi definition} \\
        \mu (\cdot \mid s) := f_{\varphi} (s, \xi) \mid_{\xi \sim \mathcal{N} (\mathbf{0}, \mathbf{I})} + \mathcal{N} (\mathbf{0}, d/K), \label{eq: ADAC (DDPG) mu definition}
    \end{gather}
    \vspace{-1em}
}
\STATE{where $s$ and $\xi$ are input to the neural network $f_{\varphi}$ (see Figure~\ref{fig: policy network architecture} for its structure). For the target policy, $\xi$ is fixed as a $n_{\xi}$-dimensional zero vector. To sample an action from the behavior policy, we first sample $\xi$ from $\mathcal{N} (\mathbf{0}, \mathbf{I})$, feed it into $f_{\varphi}$ together with $s$, and add Gaussian noise $\mathcal{N} (\mathbf{0}, d / K)$.}

\vspace{0.3em}

\REPEAT

\STATE{Reset the environment $\mathcal{E}$ and receive the initial state $s_{0}$.}

\FOR{$t = 0, \ldots, T_{\mathrm{max}} - 1$}

\STATE{Sample $a_{t}$ from $\mu (\cdot \mid s_{t})$ according to Eq~\eqref{eq: ADAC (DDPG) mu definition}.}

\STATE{Execute $a_{t}$ in $\mathcal{E}$ and observe environment reward $r_{t}$, intrinsic reward $r_{t}^{\mathrm{in}}$ (from any intrinsic motivation approach, or simply set to zero), and the next state $s_{t+1}$.}

\STATE{Store tuple $\left(s_{t}, a_{t}, r_{t}, r_{t}^{in}, s_{t+1}\right)$ in replay buffer $\mathbb{B}$.}

\STATE{Call \textbf{training procedure}}

\STATE{\textbf{Break} if the current episode terminates on $s_{t+1}$.}

\ENDFOR

\UNTIL{$\varphi^{\prime}$ converge or reaching the pre-defined number of training steps}

\STATE{\textbf{return} $\varphi^{\prime}$}

\STATE{}

\STATE{\textbf{training procedure}}

\STATE{\hspace{\algorithmicindent} Sample a minibatch of $M$ samples $\{ ( s_{i}, a_{i}, r_{i}, r_{i}^{in}, s'_{i} ) \}_{i = 1}^{M}$ from $\mathbb{B}$.}

\STATE{\hspace{\algorithmicindent} Update $\phi$ and $\psi$ by minimizing the losses
    \begin{gather*}
        \mathcal{L} (\phi) = \frac{1}{M} \sum_{i = 1}^{M} (Q^{\mathrm{tar}}_{\phi}(\rlstate_{i}, \action_{i}) - y_{i})^{2}, \\
        \mathcal{L}' (\psi) = \frac{1}{M} \sum_{i = 1}^{M} (Q^{\mathrm{beh}}_{\psi}(\rlstate_{i}, \action_{i}) - y_{i}')^{2},
    \end{gather*}
}
\STATE{\hspace{\algorithmicindent} where $y_{i} = r_{i} + \gamma Q^{\mathrm{tar}}_{\phi^{\prime}}(s'_{i}, \pi_{\psi^{\prime}}(s'_{i}))$ and $y_{i}' = r_{i} + r_{i}^{in} + \gamma Q^{\mathrm{beh}}_{\psi^{\prime}}(s'_{i}, \pi_{\varphi^{\prime}}(s'_{i}))$. ~~\textbf{//Update the critic}}

\STATE{\hspace{\algorithmicindent} Update $\varphi$ following the gradient
    \vspace{-0.5em}
    \begin{align*}
        \nabla_{\varphi} J_{\pi} (\varphi) := \frac{1}{M} \sum_{i = 1}^{M} \nabla_{a} Q^{\mathrm{tar}}_{\phi} (s_{i}, a) \nabla_{\varphi} \pi (s_{i}) \mid_{a = \pi (s_{i}),}
    \end{align*}
    \vspace{-0.3em}
}
\STATE{\hspace{\algorithmicindent} where $\pi$ is defined according to Eq~\eqref{eq: ADAC (DDPG) pi definition}. ~~\textbf{//Update the target policy}}

\STATE{\hspace{\algorithmicindent} Sample $\{ \xi_{j} \}_{j = 1}^{K}$, where $\forall j = 1, \dots, K, \xi_{j} \sim \mathcal{N} (\mathbf{0}, \mathbf{I})$. Update $\varphi$ following the gradient
    \begin{align*}
        \nabla_{\varphi} J_{\mu} (\varphi) := \frac{1}{M^{2}} \sum_{i = 1}^{M} \sum_{j = 1}^{M} [ \mathcal{K} (a, a'_{j}) \nabla_{a'_{j}} Q^{\mathrm{beh}}_{\psi} (s_{i}, a'_{j}) + \beta \cdot \nabla_{a'_{j}} \mathcal{K} (a, a'_{j}) ] \mid_{a = f_{\varphi} (s_{i}, \xi_{j})} \cdot \nabla_{\varphi} f_{\varphi} (s_{i}, \xi_{j}),
    \end{align*}
}
\STATE{\hspace{\algorithmicindent} where $\mathcal{K}(a, \hat{a}) = \frac{1}{\sqrt{2 \pi} (d / K)} \exp \left ( - \frac{\left \| a - \hat{a} \right \|^{2}}{2 (d / K)^2} \right )$. ~~\textbf{//Update the behavior policy}}

\STATE{\hspace{\algorithmicindent} $\phi^{\prime} :=\tau \phi+(1-\tau) \phi^{\prime}$; $\varphi^{\prime} :=\tau \varphi+(1-\tau) \varphi^{\prime}$; $\theta^{\prime} :=\tau \theta+(1-\tau) \theta^{\prime}$ ~~\textbf{//Update target networks}}

\STATE{\textbf{return}}

\end{algorithmic}
} 
\end{algorithm}

\begin{algorithm}[t]
\caption{Adventurous Actor-Critic (ADAC) with TD3 as the base model}
\label{alg:ADAC_TD3}
{\fontsize{9}{9} \selectfont
\begin{algorithmic}[1]

\STATE{\textbf{input:} environment $\mathcal{E}$, batch size $M$, maximum episode length $T_{\mathrm{max}}$, dimension of the action space $d$, mini-batch size $K$, policy update interval $d_{\pi}$, $c$, $\sigma$, $n_{\xi}$, and $\tau$.} 

\STATE{\textbf{initialize:} networks $Q^{\mathrm{tar}}_{\phi_{k}}$, $Q^{\mathrm{beh}}_{\psi_{k}}$, and $f_{\varphi}$; target networks $Q^{\mathrm{tar}}_{\phi^{\prime}_{k}}$, $Q^{\mathrm{beh}}_{\psi^{\prime}_{k}}$, and $f_{\varphi^{\prime}}$ ($k = 1, 2$); replay buffer $\mathbb{B}$. $Q^{\mathrm{tar}}_{\phi}$ and $Q^{\mathrm{beh}}_{\phi}$ correspond to $Q^{\pi}_{\rewards}$ and $Q^{\mu}_{\rewards'}$ in the main text, respectively. $\phi^{\prime}_{k} := \phi_{k}$; $\psi^{\prime}_{k} := \psi_{k}$; $\varphi^{\prime} := \varphi$ ($k = 1, 2$).}

\STATE{Define the deterministic target policy $\pi$ and the stochastic behavior policy $\mu$ with $f_{\varphi}$:
    \vspace{-0.3em}
    \begin{gather}
        \pi (s) := f_{\varphi} (s, \xi) \mid_{\xi = [0,0,\dots,0]^{T}}, \label{eq: ADAC (TD3) pi definition} \\
        \mu (\cdot \mid s) := f_{\varphi} (s, \xi) \mid_{\xi \sim \mathcal{N} (\mathbf{0}, \mathbf{I})} + \mathcal{N} (\mathbf{0}, d/K), \label{eq: ADAC (TD3) mu definition}
    \end{gather}
    \vspace{-1em}
}
\STATE{where $s$ and $\xi$ are input to the neural network $f_{\varphi}$ (see Figure~\ref{fig: policy network architecture} for its structure). For the target policy, $\xi$ is fixed as a $n_{\xi}$-dimensional zero vector. To sample an action from the behavior policy, we first sample $\xi$ from $\mathcal{N} (\mathbf{0}, \mathbf{I})$, feed it into $f_{\varphi}$ together with $s$, and add Gaussian noise $\mathcal{N} (\mathbf{0}, d / K)$.}

\vspace{0.3em}

\REPEAT

\STATE{Reset the environment $\mathcal{E}$ and receive the initial state $s_{0}$.}

\FOR{$t = 0, \ldots, T_{\mathrm{max}} - 1$}

\STATE{Sample $a_{t}$ from $\mu (\cdot \mid s_{t})$ according to Eq~\eqref{eq: ADAC (TD3) mu definition}.}

\STATE{Execute $a_{t}$ in $\mathcal{E}$ and observe environment reward $r_{t}$, intrinsic reward $r_{t}^{\mathrm{in}}$ (from any intrinsic motivation approach, or simply set to zero), and the next state $s_{t+1}$.}

\STATE{Store tuple $\left(s_{t}, a_{t}, r_{t}, r_{t}^{in}, s_{t+1}\right)$ in replay buffer $\mathbb{B}$.}

\STATE{Call \textbf{training procedure}}

\STATE{\textbf{Break} if the current episode terminates on $s_{t+1}$.}

\ENDFOR

\UNTIL{$\varphi^{\prime}$ converge or reaching the pre-defined number of training steps}

\STATE{\textbf{return} $\varphi^{\prime}$}

\STATE{}

\STATE{\textbf{training procedure}}

\STATE{\hspace{\algorithmicindent} Sample a minibatch of $M$ samples $\{ ( s_{i}, a_{i}, r_{i}, r_{i}^{in}, s'_{i} ) \}_{i = 1}^{M}$ from $\mathbb{B}$.}

\STATE{\hspace{\algorithmicindent} For all $i = 1, \dots, M$, $a'_{i} \leftarrow \pi_{\varphi^{\prime}} (s'_{i}) + \epsilon \quad (\epsilon \sim \mathtt{clip} (\mathcal{N} (0, \sigma), -c, c)$}

\STATE{\hspace{\algorithmicindent} Update $\phi$ and $\psi$ by minimizing the losses ($k = 1, 2$)
    \begin{gather*}
        \mathcal{L}_{k} (\phi) = \frac{1}{M} \sum_{i = 1}^{M} (Q^{\mathrm{tar}}_{\phi_{k}}(\rlstate_{i}, \action_{i}) - y_{i})^{2}, \\
        \mathcal{L}'_{k} (\psi) = \frac{1}{M} \sum_{i = 1}^{M} (Q^{\mathrm{beh}}_{\psi_{k}}(\rlstate_{i}, \action_{i}) - y_{i}')^{2},
    \end{gather*}
}
\STATE{\hspace{\algorithmicindent} where $y_{i} = r_{i} + \gamma \min_{k = 1, 2} Q^{\mathrm{tar}}_{\phi^{\prime}_{k}}(s'_{i}, a'_{i})$ and $y_{i}' = r_{i} + r_{i}^{in} + \gamma \min_{k = 1, 2} Q^{\mathrm{beh}}_{\psi^{\prime}_{k}}(s'_{i}, a'_{i})$. ~~\textbf{//Update the critic}}

\STATE{\hspace{\algorithmicindent} \textbf{if} $t \mathrm{mod} d == d - 1$ \textbf{then}}

\STATE{\hspace{2\algorithmicindent} Update $\varphi$ following the gradient
    \vspace{-0.5em}
    \begin{align*}
        \nabla_{\varphi} J_{\pi} (\varphi) := \frac{1}{M} \sum_{i = 1}^{M} \nabla_{a} Q^{\mathrm{tar}}_{\phi_{1}} (s_{i}, a) \nabla_{\varphi} \pi (s_{i}) \mid_{a = \pi (s_{i}),}
    \end{align*}
    \vspace{-0.3em}
}
\STATE{\hspace{2\algorithmicindent} where $\pi$ is defined according to Eq~\eqref{eq: ADAC (TD3) pi definition}. ~~\textbf{//Update the target policy}}

\STATE{\hspace{2\algorithmicindent} Sample $\{ \xi_{j} \}_{j = 1}^{K}$, where $\forall j = 1, \dots, K, \xi_{j} \sim \mathcal{N} (\mathbf{0}, \mathbf{I})$. Update $\varphi$ following the gradient
    \begin{align*}
        \nabla_{\varphi} J_{\mu} (\varphi) := \frac{1}{M^{2}} \sum_{i = 1}^{M} \sum_{j = 1}^{M} [ \mathcal{K} (a, a'_{j}) \nabla_{a'_{j}} Q^{\mathrm{beh}}_{\psi_{1}} (s_{i}, a'_{j}) + \beta \cdot \nabla_{a'_{j}} \mathcal{K} (a, a'_{j}) ] \mid_{a = f_{\varphi} (s_{i}, \xi_{j})} \cdot \nabla_{\varphi} f_{\varphi} (s_{i}, \xi_{j}),
    \end{align*}
}
\STATE{\hspace{2\algorithmicindent} where $\mathcal{K}(a, \hat{a}) = \frac{1}{\sqrt{2 \pi} (d / K)} \exp \left ( - \frac{\left \| a - \hat{a} \right \|^{2}}{2 (d / K)^2} \right )$. ~~\textbf{//Update the behavior policy}}

\STATE{\hspace{2 \algorithmicindent} $\phi^{\prime} :=\tau \phi+(1-\tau) \phi^{\prime}$; $\varphi^{\prime} :=\tau \varphi+(1-\tau) \varphi^{\prime}$; $\theta^{\prime} :=\tau \theta+(1-\tau) \theta^{\prime}$ ~~\textbf{//Update target networks}}

\STATE{\hspace{\algorithmicindent} \textbf{end if}}

\STATE{\textbf{return}}

\end{algorithmic}
}
\end{algorithm}

This section provides algorithm details of the proposed algorithm \emph{Analogous Disentangled Actor-Critic} (ADAC). For the readers' convenience, we provide pseudo-code of both the DDPG-based ADAC (Algorithm~\ref{alg:ADAC_DDPG}) and the TD3-based ADAC (Algorithm~\ref{alg:ADAC_TD3}), despite their similarities. In the following, we use Algorithm~\ref{alg:ADAC_DDPG} as an example to provide a comprehensive view of the ADAC algorithm.

As outlined in Section~\ref{Algorithm Overview}, ADAC iterates between the two main procedures, i.e., \emph{sample collection} and \emph{model update}. The sample collection phase consists of lines 8 to 10 in Algorithm~\ref{alg:ADAC_DDPG}, where the behavior policy $\mu$ (defined in lines 3 and 4) is used to interact with the environment and collect samples of experience. The model update phase (lines 17 to 26) is invoked after a new sample is added to the replay buffer. It consists of three steps: \emph{critic update}, \emph{target policy update}, and \emph{behavior policy update}. Note that the target policy and the behavior policy share the same neural network $f_{\varphi}$, the latter two steps are both devoted to update the corresponding parameter $\varphi$. The critic update phase (lines 19 and 20) follows Eq~\eqref{Eq:critic_objective}, where the target critic $Q^{\mathrm{tar}}_{\phi}$ is updated with respect to the environment-defined reward $\rewards$ and the target policy $\pi$, while the behavior critic $Q^{\mathrm{beh}}_{\psi}$ is updated with regard to the augmented reward $\rewards + \rewards^{\mathrm{in}}$ and the target policy $\pi$. The target policy update step (lines 21 and 22) and the behavior policy update step (lines 23 and 24) follow the gradient defined by Eqs~\eqref{Eq:deterministic-policy-gradient} and \eqref{eq:SVGD_gradient}, respectively. Two optimizers are used to apply the gradients $\nabla_{\varphi} J_{\pi} (\varphi)$ (line 21) and $\nabla_{\varphi} J_{\mu} (\varphi)$ (line 23) to the parameters $\varphi$, respectively.



\begin{figure}[t!]
\centering
\includegraphics[width=0.55\textwidth]{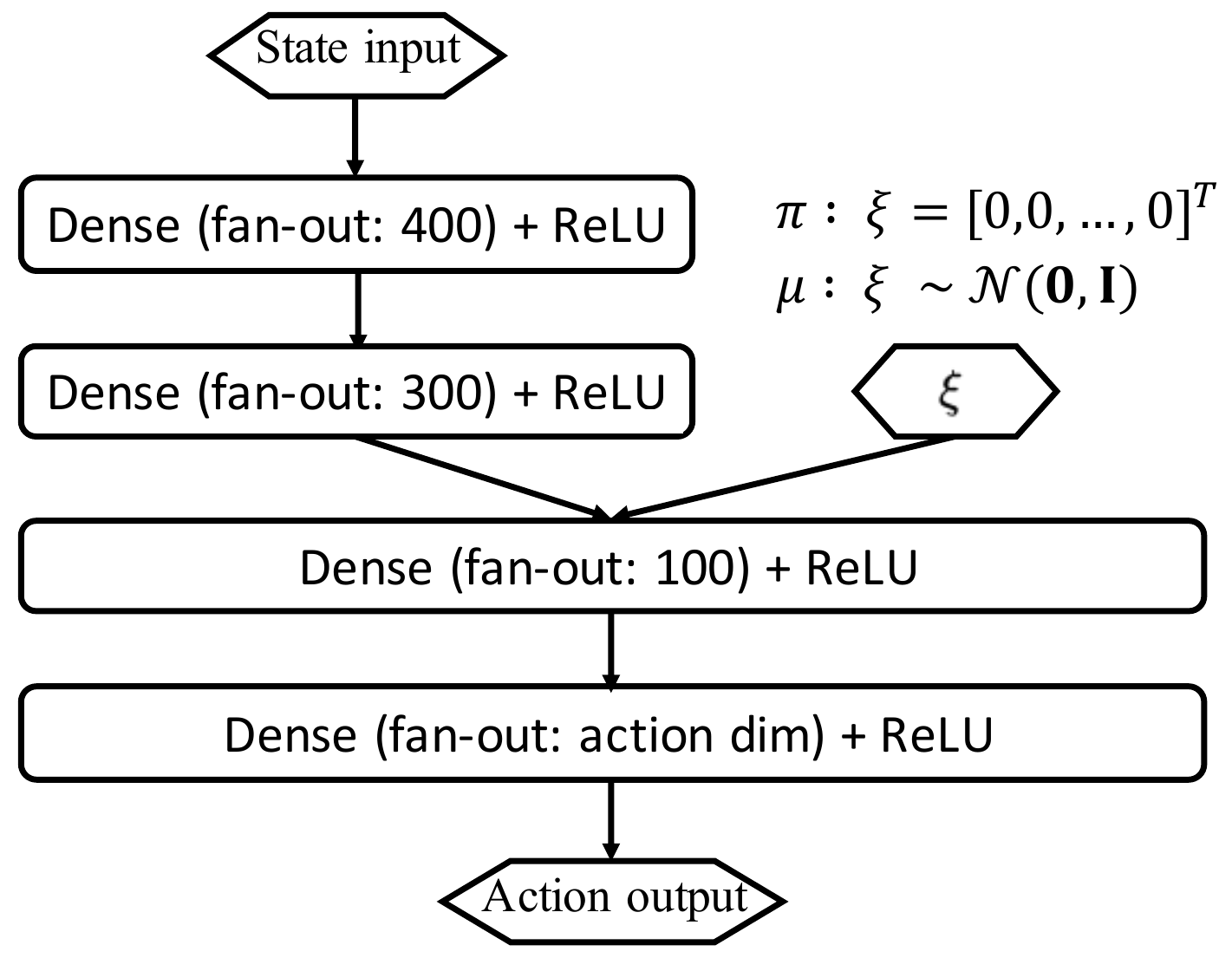}
\caption{Specifications of the policy network in ADAC.}
\label{fig: policy network architecture}
\end{figure}

\begin{table*}[t]
\caption{Environments adopted in the benchmark test. We adopted Roboschool environments with -v1 and Mujoco tasks with -v2.}
\label{table_envs}
\centering
{\fontsize{9}{9}\selectfont

\begin{tabular}{cccc}
    \toprule
    Environment & Description & $\mid \states \mid$ & $\mid \actions \mid$ \\
    \midrule \midrule
    RoboschoolAnt & Make a four-legged ant crawl & 28 & 8 \\
    RoboschoolHopper & Make a 2D robot hop & 15 & 3 \\
    RoboschoolHalfCheetah & Make a 2D cheetah run & 26 & 6 \\
    RoboschoolAtlasForwardWalk & Make the Boston Dynamics ATLAS robot run & 70 & 30 \\
    RoboschoolWalker2d & Make a two-legged robot run & 22 & 6 \\
    Ant & Make a four-legged creature walk & 28 & 8 \\
    Hopper & Make a 2D robot hop & 11 & 3 \\
    HalfCheetah & Make a 2D cheetah robot run & 17 & 6 \\
    Walker2d & Make a two-legged robot run & 15 & 3 \\
    InvertedPendulum & Balance a pole on a cart & 4 & 1 \\
    InvertedDoublePendulum & Balance a pole on a pole on a cart & 11 & 1 \\
    BipedalWalker & Move a two-legged robot on flat road & 24 & 4 \\
    BipedalWalkerHarder & Move a two-legged robot on bumpy road & 24 & 4 \\
    Lunar Lander Continuous & Navigate a lander to its landing pad & 8 & 2 \\
    \bottomrule
    
\end{tabular}
}
\end{table*}

\section{Continuous-Control Benchmarks}
\label{Continuous Control Benchmarks}

In Section~\ref{Benchmark tests} we adopted 14 standard continuous control benchmarks developed from the OpenAI Gym \cite{openaigym} package, powered by either the Mujoco physics simulator \cite{todorov2012mujoco} or the Roboschool simulator \cite{roboschool2019roboschool}. For each domain, the observation consists of physical states such as positions and angles. A brief description of each domain as well as its size of the state and action space is detailed in Table~\ref{table_envs}.

\section{sparse-reward Environments}
\label{sparse-reward environments}
In Section~\ref{Evaluation in Sparse-Reward Environments}, four sparse-reward tasks are adopted. Without otherwise noted, the prototypes of the four environments are available from the OpenAI Gym toolkit \cite{openaigym}. In the following, we provide the detailed setup of the four environments.

\vspace{0.3em}
\noindent \textbf{MountainCarContinuous} $\;$ We make no modifications to the original MountainCarContinuous. Its goal is to drive a car up the hill by applying left/right force. In addition to the goal-state reward, which is 100, the agent receives a negative reward, which equals to the magnitude of action, that is, $r (s, a) := 100 \cdot \mathbbm{1} [ s' \mathrm{is~the~goal~state}] - | a |$ ($s'$ denotes the next state after executing action $a$ in state $s$).

\vspace{0.3em}
\noindent \textbf{AcrobotContinuous} $\;$ AcrobotContinuous converts the original discrete action space of Acrobot-v1 (can be found in the OpenAI Gym Toolkit) to a continuous one, by mapping the actions uniformly to $[-1,1]$:
\begin{equation}
\nonumber
	a_{\mathrm{disc}} = \left\{\begin{array}{ccc}{\text{Original action 0}} & { a_{\mathrm{cont}} \in [-1, -1/3) } \\ {\text{Original action 1}} & { a_{\mathrm{cont}} \in [-1/3, 1/3) } \\ {\text{Original action 2}} & { a_{\mathrm{cont}} \in [1/3, 1] } \end{array}\right.,
\end{equation} 
\noindent where $a_{\mathrm{disc}}$ is the action to be applied to the original task (Acrobot-v1), and $a_{\mathrm{cont}}$ is the continuous action whose action space is $[0, 1]$.
The original Acrobot-v1 environment aims to swing up a two-linked-arm. Positive reward is received only after the end of the arm reaches above a pre-defined height.

\vspace{0.3em}
\noindent \textbf{PendulumSparse} $\;$ PendulumSparse borrows the base model from Pendulum-v0 (can be found in the OpenAI Gym Toolkit), where the agent learns to balance a single pendulum by applying clock-wise or counter clock-wise torque. We change the original dense reward to a sparse one:
\begin{equation}
    \nonumber
	r = \left\{\begin{array}{cc}{10.0} & { \cos (\theta) > 0.95 } \\ {0.0} & {\text { otherwise }}\end{array}\right.,
\end{equation} 
\noindent where $\theta$ is the pole's angle w.r.t. the verticle axis (i.e. when pointed upright, $\theta = 0$). 

\vspace{0.3em}
\noindent \textbf{CartPoleSwingUp} $\;$ The original CartPoleSwingUp task is downloaded from GitHub (\url{https://github.com/TTitcombe/CartPoleSwingUp}). It has a pole connected to a cart, which is placed on a plane. Initially, the pole points down due to the gravity. The task is to swing up the pole and balance it on top of the cart. On top of its original designs, we sparsify rewards by suppressing them unless $\cos (\theta) > 0.8$, where $\theta$ again denotes the pole's angle. An additional action penalty $-0.1 | \action |$ (the action space is 1-dimensional) is also added:
    \begin{align*}
        r' (s, a) = \left \{ \begin{array}{cc}{r (s, a) - 0.1 |a|} & {\cos (\theta) > 0.8} \\ {-0.1 |a|} & {\mathrm{otherwise}} 
        \end{array} \right.,
    \end{align*}
\noindent where $r (s, a)$ is the reward defined by the original task, and $r' (s, a)$ is the modified reward.

\section{Hyper-parameters}
\label{Hyper-parameters and Network Structure}

We have made great efforts to make sure we have a fair comparison with baselines. To be more specific, to best retain their performance, we always adopt hyper-parameters reported in the respective papers and use their open-source code when available; if not, we have to resort to our own versions, which are always based on the most-stared third-party implementations in Github. Since ADAC is based on existing off-policy models, i.e. DDPG and TD3, we fix the original hyper-parameters such as replay memory size and batch size, and tune parameters introduced by ADAC only, i.e., $K$, $\beta$, and learning rate of $f_{\varphi}$. Specifically, the entropy controlling factor $\beta$ is annealed from 2.0 to 1.0 during training, and the policy network sample size $K$ is 32 for all tasks except Hopper and RoboschoolAtlasForwardWalk, where it is 8. $\xi$ is selected as a 16-dimensional standard normal random variable (when computing behavior policy) and a 16-dimensional zero vector (when computing target policy). In ADAC (TD3), learning rate for the target policy and behavior policy is $1e-3$ and $3e-4$, respectively. For ADAC (DDPG), learning rates of both policies are set to $1e-4$.

\section{Full Benchmark Result}
\label{Full Benchmark Result}

This section presents experimental result of our proposed method ADAC along with 4 baselines on 14 continuous control benchmarks. Training curves are illustrated in Figure~\ref{fig:benchmark_learning_curves}. For final episode return, please refer to Table~\ref{table:benchmark} in the main text.

\begin{figure*}[h]
\centering
\includegraphics[width=\textwidth]{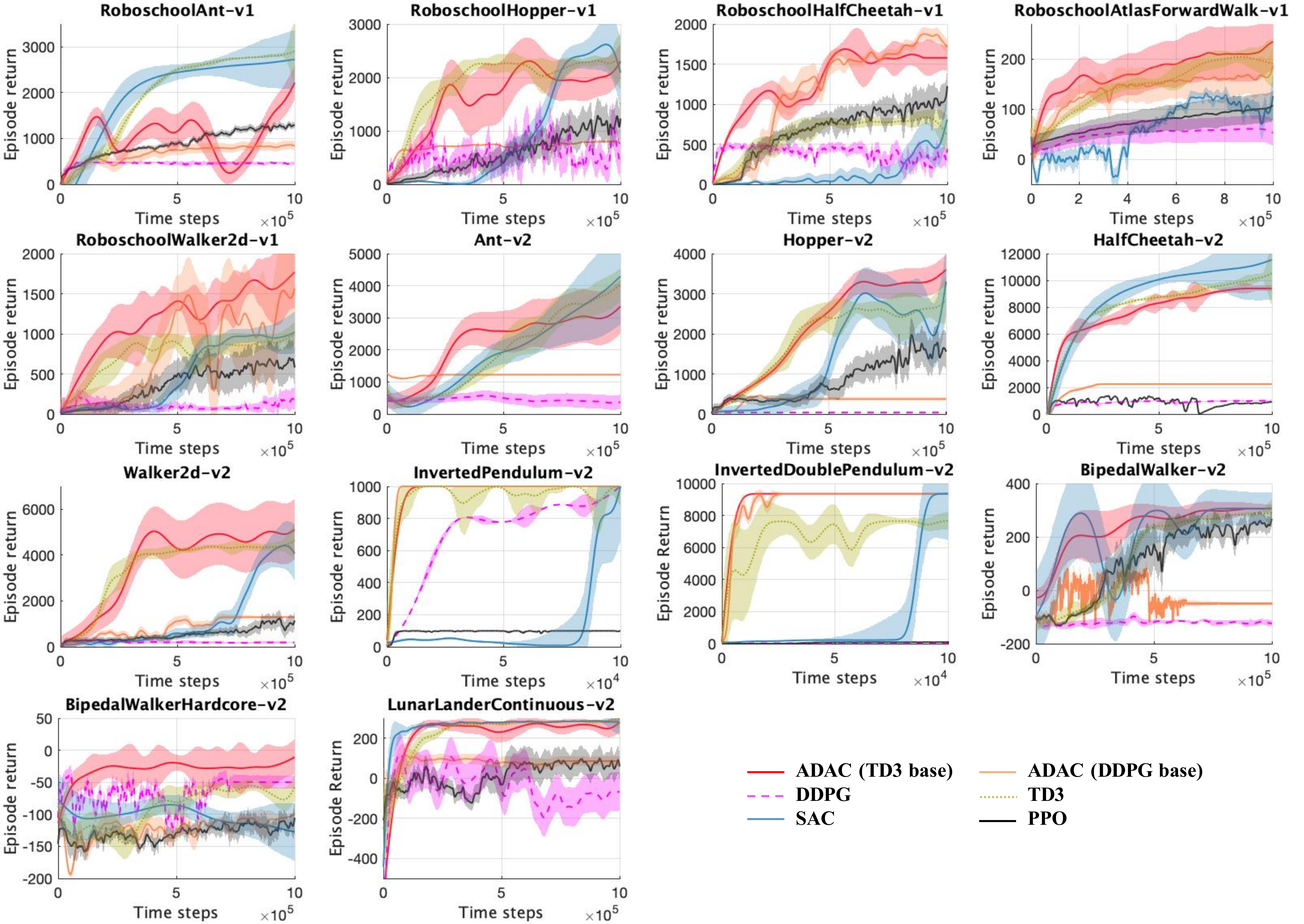}
\caption{Learning curves of our ADAC along with 4 baselines on 14 continuous-control domains smoothed over 2000 time steps. Lines denote the average over 20 trials and the shaded areas represent the range of one standard deviation. }
\label{fig:benchmark_learning_curves}
\end{figure*}

\end{document}